\documentclass[11pt, a4paper, logo, copyright, nonumbering]{map}
\usepackage{CJKutf8}
\usepackage{xargs}  
\usepackage{todonotes}
\usepackage{amsmath}
\usepackage{dsfont}

\usepackage{longtable}

\usepackage{svg}
\usepackage{fontawesome}
\usepackage{array}
\usepackage{tabularx}
\usepackage{latexsym}
\usepackage{graphicx}
\usepackage[utf8]{inputenc} 
\usepackage[utf8]{inputenc} 
\usepackage{amssymb} 
\usepackage{url}            
\usepackage{amsfonts}       
\usepackage{nicefrac}       
\usepackage{microtype}      
\usepackage{xspace}

\usepackage{listings}
\usepackage{makecell}
\usepackage{inconsolata}
\usepackage{multicol}
\usepackage{amstext}

\definecolor{darkblue}{RGB}{84, 112, 198}

\definecolor{lightblue}{rgb}{0.85, 0.95, 1.0}    
\definecolor{lightgreen}{rgb}{0.90, 1.0, 0.90}    
\definecolor{lightorange}{rgb}{1.0, 0.95, 0.85}   
\definecolor{lightpurple}{rgb}{0.95, 0.90, 1.0}   
\definecolor{lightgray}{rgb}{0.97, 0.97, 0.97}    

\usepackage{tikz}
\usetikzlibrary{calc}
\definecolor{battery-empty}{rgb}{0.9, 0.9, 0.9}
\newcommand{\difficultybar}[1]{%
  \begin{tikzpicture}[baseline, scale=0.5, every node/.style={scale=0.8}]
    \foreach \i in {1,2,3,4,5} {
      \ifnum\i>#1
        \draw[fill=battery-empty] (\i*0.5-0.5, 0) rectangle (\i*0.5, 0.25);
      \else
        \pgfmathsetmacro{\colorlevel}{80 - 12*(\i)} 
        \edef\x{\noexpand\draw[fill=blue!\colorlevel!white, opacity=0.9] (\i*0.5-0.5, 0) rectangle (\i*0.5, 0.25);}
        \x
        \draw[blue!50!black] (\i*0.5-0.5, 0) rectangle (\i*0.5, 0.25);
      \fi
    }
    \fill[battery-empty!70] (2.5, 0.08) rectangle (2.6, 0.17);
    \draw[battery-empty!70!black] (2.5, 0.08) rectangle (2.6, 0.17);
  \end{tikzpicture}%
}
\renewcommand{\arraystretch}{0.96}

\definecolor{hidden-draw}{RGB}{20,68,106}
\definecolor{hidden-pink}{RGB}{255,245,247}
\usepackage[edges]{forest}

\usepackage{bm}

\usepackage{natbib}
\usepackage{CJKutf8}
\usepackage{xargs}
\usepackage{todonotes}
\usepackage{multirow}
\usepackage{cleveref}
\usepackage{subcaption}
\usepackage{url}
\usepackage{svg}
\usepackage{fontawesome}
\usepackage{xcolor}
\usepackage{float}

\usepackage[utf8]{inputenc}
\usepackage{booktabs}
\usepackage{graphicx}
\usepackage{array}
\usepackage{tabularx}
\usepackage{xcolor,colortbl}  
\definecolor{boxcolor}{HTML}{d92523} 
\definecolor{bulbcolor}{HTML}{e3b87f} 
\usepackage{url}
\usepackage{ragged2e}   
\usepackage{makecell} 

\usepackage{hyperref}       
\usepackage{url}            
\usepackage{booktabs}       
\usepackage{nicefrac}       
\usepackage{microtype}      
\usepackage{xcolor}         
\usepackage{graphicx}
\usepackage{longtable}
\usepackage{array}
\usepackage{pbox}
\usepackage{tabularx}
\usepackage{multirow}
\usepackage{wrapfig}
\usepackage{pifont}
\usepackage{lipsum}
\usepackage{subcaption}
\usepackage{enumitem}
\usepackage{tikz}
\usepackage{xstring}

\usepackage{color-edits}

\usepackage{booktabs}
\usepackage{multirow}
\usepackage[table]{xcolor}
\usepackage{tabularx}

\usepackage{parskip} 

\usepackage{threeparttable} 

\usepackage{algorithm}
\usepackage{algpseudocode}

\usepackage{xspace}
\usepackage{times}
\usepackage{latexsym}
\usepackage{booktabs}
\usepackage{fvextra}
\usepackage{graphicx}
\usepackage{subcaption} 
\usepackage[T1]{fontenc}
\usepackage{xcolor}
\usepackage[utf8]{inputenc}

\usepackage{microtype}

\usepackage{inconsolata}
\usepackage{amsmath}
\usepackage{booktabs}
\usepackage{graphicx}
\usepackage{pifont}
\usepackage{multirow}          
\usepackage{colortbl}          
\usepackage{algorithm}
\usepackage{algpseudocode}
\usepackage{xcolor}
\usepackage{amssymb}
\usepackage{amsmath}
\usepackage{amssymb}
\usepackage{amsthm}
\usepackage{tabularx}
\usepackage{array}

\newcolumntype{Y}{>{\raggedright\arraybackslash}X}

\newtheorem{proposition}{Proposition}[section]

\definecolor{rliableolive}{HTML}{BBCC33}
\definecolor{rliableblue}{HTML}{77AADD}
\definecolor{rliablered}{HTML}{f63c44}

\algrenewcommand\algorithmicrequire{\textbf{Input:}}
\algrenewcommand\algorithmicensure{\textbf{Output:}}
\algrenewcommand\alglinenumber[1]{\small #1:}

\usepackage{alltt}
\usepackage[most,skins,theorems]{tcolorbox}
\tcbuselibrary{skins,breakable,fitting,hooks}  
\tcbset{
  aibox/.style={
    width=\linewidth,
    top=7pt,
    bottom=8pt,
    colback=white,
    colframe=black,
    colbacktitle=black,
    enhanced,
    center,
    attach boxed title to top left={yshift=-0.1in,xshift=0.15in},
    boxed title style={boxrule=0pt,colframe=white},
  }
}
\tcbset{
  aibox2/.style={
    width=\linewidth,
    top=7pt,
    bottom=2pt,
    colback=green!18!white,
    colframe=black,
    colbacktitle=black,
    enhanced,
    center,
    attach boxed title to top left={yshift=-0.1in,xshift=0.15in},
    boxed title style={boxrule=0pt,colframe=white,},
  }
}
\newtcolorbox{AIbox}[2][]{aibox,title=#2,#1}
\newtcolorbox{AIbox2}[2][]{aibox2,title=#2,#1}

\usepackage{xcolor}

\hypersetup{
    colorlinks=true,            
    linkcolor=blue,             
    filecolor=magenta,          
    urlcolor=cyan,              
    citecolor=purple,             
    pdftitle={Overleaf Example},
    pdfpagemode=FullScreen,
}

\definecolor{iquestblue}{HTML}{173C7F}
\definecolor{iquestazure}{HTML}{528FCC}

\newcommandx{\info}[2][1=]{\todo[linecolor=red,backgroundcolor=red!25,bordercolor=red,#1]{#2}}

\title{
\vspace{-0.2in}
\centering \fontsize{15pt}{16pt}\selectfont

Harness Scaling for Generalizable Agentic Reinforcement Learning

\vspace{-0.2in}
}

\author{
Fei Bai\textsuperscript{1*},
Huatong Song\textsuperscript{1*},
Chuan Hao\textsuperscript{2},\\
\normalfont
Ran Tao\textsuperscript{2},
Jian Yang\textsuperscript{3},
Bryan Dai\textsuperscript{2},
Wayne Xin Zhao\textsuperscript{1$\dagger$},
Ji-Rong Wen\textsuperscript{1}\\
\normalfont
\textsuperscript{1}Gaoling School of Artificial Intelligence, Renmin University of China, \\
\normalfont
\textsuperscript{2}IQuest Research, \textsuperscript{3}Beihang University \\
\normalfont
\normalsize{\textsuperscript{*}Equal contributors\quad
\textsuperscript{$\dagger$}Corresponding Authors \\
\normalfont
Email: \texttt{\{feibai,songhuatong123\}@ruc.edu.cn}, \\
\normalfont
\texttt{batmanfly@gmail.com}
}}

\newcommand{\ignore}[1]{}

\usepackage[most]{tcolorbox}
\definecolor{darkorange}{RGB}{255, 140, 0}
\definecolor{lightgreen}{RGB}{145, 204, 117}
\definecolor{lightyellow}{RGB}{250, 200, 88}
\definecolor{lightred}{RGB}{238, 102, 102}
\definecolor{lightblue}{RGB}{115, 192, 222}

\definecolor{gray_1}{HTML}{B7B7B7}
\definecolor{gray_2}{HTML}{F0F0F0} 
\definecolor{frame_blue}{HTML}{A9D18E}

\newtcolorbox[auto counter, number within=section]{PromptBox}[2][]{
    enhanced,
    breakable,
    colback=gray_2, 
    colframe=gray_1,
    coltitle=white,
    fontupper=\small,
    fonttitle=\bfseries,
    title={#2}, 
    label={#1},
    arc=2pt,
    boxrule=1pt,
    left=2mm, right=2mm, top=2mm, bottom=2mm,
}

\usepackage{etoolbox}
\makeatletter
\patchcmd{\@maketitle}
  {\@toptitlebar}
  {%
    \hbox to \textwidth{%
    
    \includegraphics[height=0.7cm]{figures/aibox-logo.png}%
      \hfill
      \includegraphics[height=0.7cm]{figures/logo2.pdf}%
    }%
    \vspace{0.3em}
    \@toptitlebar
  }
  {}{}
\makeatother

\begin{abstract}
Agent harnesses have become a central component of modern agentic systems, shaping how models use tools, manage context, and coordinate long-horizon interaction. Realizing the benefits of these harnesses depends on how effectively the underlying model can make use of them, motivating reinforcement learning directly through the harness. While recent efforts have explored harness diversification to reduce specialization to particular execution settings, how to systematically construct and organize harness diversity for cross-harness generalization remains insufficiently understood. This work studies Harness~Scaling as a distinct scaling dimension for agentic RL, focusing on structured variation in how policies execute and interact with the same underlying task and environment. Concretely, we construct a factorized harness space, select a compact training subset with balanced factorial coverage, and organize cross-harness training through locality-aware task assignment. Across multiple benchmarks, model scales, and RL algorithms, Harness Scaling consistently improves generalization to unseen harness combinations. Moreover, policies trained through structured white-box harness variation transfer effectively to unseen external black-box harnesses, outperforming policies trained through a single complex black-box harness. These results establish harness diversity as a meaningful scaling dimension for agentic RL and highlight the importance of how this diversity is organized for cross-harness generalization.  
\end{abstract}

\begin{document}
\maketitle

\let\oldthefootnote\thefootnote




\section{Introduction}

Agent harnesses are playing an increasingly central role in modern
agentic systems~\cite{tang2026agent}. By supporting sustained interaction, tool use, state
maintenance, and recovery from failures~\cite{chen2026failedtrajectoriesreliablellm}, harnesses have substantially
improved agent performance on complex, long-horizon tasks~\cite{yang2024sweagentagentcomputerinterfacesenable,ursekar2026veroharnessagentsoptimize,ma2026longhorizonharnessadvancinglonghorizonagents}. Yet the
benefits of a capable harness depend not only on its design, but also on
whether the underlying policy can use it effectively~\cite{lin2026harnessupdatingharnessbenefit}. This naturally motivates optimizing the policy directly through the harness
with reinforcement learning~\cite{xu2026polar,yu2026openforgerl}, which strengthens its ability to exploit the
available interaction mechanisms during task solving.


However, optimizing a policy through a single fixed harness may
encourage specialization to a particular execution configuration
and weaken transfer across harnesses~\cite{he2026agentlightningv10harnessed,wang2026openclawrl}.
Recent efforts have explored harness diversification through
randomization and training across heterogeneous harnesses~\cite{huang2026katcoderv25technicalreport,song2026clawgymiiexploringblackbox}.
Yet how to systematically construct and organize harness diversity
for generalization beyond the training configurations remains
insufficiently understood.
We therefore study \textbf{Harness Scaling} as a distinct scaling
dimension for agentic RL.
Unlike task or environment
scaling~\cite{song2026envscalerscalingtoolinteractiveenvironments,dong2026agentworldscalingrealworldenvironment,bai2026clawgym,wang2026agentworldmodelinfinity},
which expands what agents solve or where they operate,
Harness Scaling varies how policies execute and interact with
the same underlying task and environment, promoting generalization
across execution configurations.

A key challenge in Harness Scaling is how to model harness variation and select representative configurations under a fixed training budget.
Given the modular
nature of agent harnesses~\cite{he2026agentlightningv10harnessed,shao2026harnessr1learningeditexecutable}, we factorize harness variation along five
components: system prompt, tool interface, observation, context management, and workflow.
Each component contains multiple realizations, whose combinations define a
factorial space of complete harness configurations. As variation along these
components increases, the space expands rapidly, while exhaustive training
can introduce substantial redundancy under a fixed budget. We therefore
preserve low-order factorial structure in a compact training set, balancing
individual component realizations and their pairwise combinations through
an orthogonal design~\cite{rao1947factorial,hedayat1999orthogonal}. During reinforcement learning, the same task is paired
with structurally nearby harness configurations, providing controlled local
variation within each task while maintaining broad coverage across the
training space. 

To demonstrate the effectiveness of Harness Scaling, we evaluate
cross-harness generalization across multiple benchmarks, model scales, and
RL algorithms. By training across diverse harness configurations, Harness
Scaling reduces specialization to any single harness and encourages more
generalizable agent behavior. Empirically, it consistently outperforms
single-harness RL on unseen white-box configurations. 
More importantly, policies trained through structured white-box harness
variation transfer effectively to complex external black-box harnesses~\cite{openclaw2026,anthropic2026claudecode},
outperforming single-harness black-box training.
These results establish harness diversity as a
meaningful scaling dimension for agentic RL, while showing that effective
scaling depends critically on how this diversity is organized during
training. In summary, our contributions are threefold:


\begin{itemize}
    \item We systematically study Harness Scaling as a distinct scaling dimension of agentic RL, complementary to task and environment scaling, and establish a controlled setting to isolate harness variation and evaluate policy generalization beyond the training configurations.



\item We develop a structured framework for Harness Scaling that
factorizes harness variation, selects a compact training set with balanced
factorial coverage, and organizes cross-harness training by assigning
structurally nearby configurations to the same task across the training space.

    \item We demonstrate across benchmarks, model scales, and RL algorithms
that Harness Scaling substantially improves generalization to unseen
white-box and external black-box harnesses, establishing both the value
of scaling across harnesses and the importance of organizing this scaling
structurally.
\end{itemize}
\section{Related Work}

\subsection{Agentic Reinforcement Learning}

Reinforcement learning has emerged as an important paradigm for improving
the capabilities of LLMs~\cite{zhao2023survey,deepseekmath}. Recent work has extended RL from single-turn
reasoning to increasingly complex agentic settings, including tool use~\cite{bai2026towards,song2025r1}, web
navigation~\cite{qi2025webrltrainingllmweb}, software engineering~\cite{song2026swe}, and professional workflows~\cite{wang2026cuagymscalingverifiabletraining}. As agentic RL moves toward broader
capability, recent efforts have increasingly scaled training along the
\emph{task} and \emph{environment} dimensions. Task scaling~\cite{sun2025simpledeepsearcher,bai2026clawgym,deng2026fort} broadens the
distribution of problems used for policy optimization, while environment
scaling~\cite{dong2026agentworldscalingrealworldenvironment,song2026envscalerscalingtoolinteractiveenvironments,wang2026agentworldmodelinfinity} expands the range of external settings in which agents interact and
receive feedback. These directions expand what agents learn to
solve and where they learn to operate, while often retaining a fixed
execution harness. In contrast, we study harness diversity as an additional
scaling dimension for agentic RL, focusing on policy learning and
generalization across execution configurations.

\subsection{Harness Design and Optimization}


The design of an agent harness shapes how language models use tools, manage context, and perform long-horizon tasks~\cite{yang2024sweagentagentcomputerinterfacesenable,lin2026harnessupdatingharnessbenefit,ma2026longhorizonharnessadvancinglonghorizonagents}. Early systems relied on manually designed harnesses, while later work explored automated improvement through search, editing, and evolution~\cite{lin2026agenticharnessengineeringobservabilitydriven,lee2026recursiveharnessselfimprovement,shao2026harnessr1learningeditexecutable}. Another line optimizes model policies through agent harnesses, including black-box reinforcement learning~\cite{xu2026polar,he2026agentlightningv10harnessed,luo2026harnessbanksemanticgenebanksearch,ning2026evoharnessrllearningselfevolvingruntime}. Recent studies have explored harness diversification through execution variation, composable configurations, and heterogeneous training~\cite{huang2026katcoderv25technicalreport,chen2026occamy10openparetofrontier35b,song2026clawgymiiexploringblackbox}. Building on this direction, our work focuses on how harness diversity can be selected and organized for cross-harness generalization. Specifically, we construct a controlled combinatorial harness space, combine balanced factorial selection with locality-aware task assignment, and examine their effects through controlled ablations. We further evaluate generalization to unseen harness combinations and external black-box harnesses.

\section{Method}

\begin{figure}[t]
    \centering
    \includegraphics[width=1.0\linewidth]{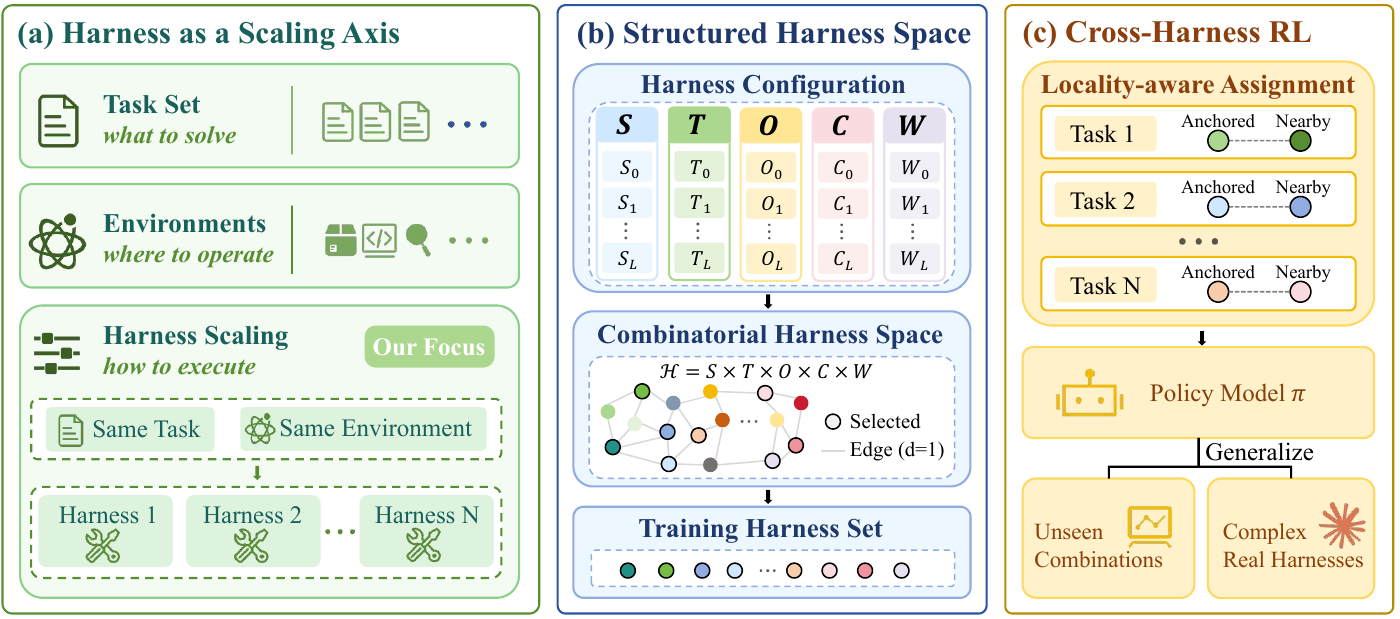}
    \caption{
Overview of Harness Scaling, from structured harness-space construction
to cross-harness training and generalization to unseen configurations.
}
    \label{fig:main}
\end{figure}

\subsection{Problem Formulation}
To study harness scaling systematically, we first formalize the space of
execution configurations and the generalization problem defined over it.
We construct a factorized harness space, define a structural notion of
proximity among its configurations, and formulate the generalization
objectives that motivate the structured harness selection and multi-harness
training procedures shown as in Figure~\ref{fig:main}.

\subsubsection{Factorized Harness Space}

A harness serves as the execution layer through which a policy interacts
with its environment. Rather than treating it as a monolithic implementation,
we factorize the harness into five components: \textit{system prompt},
\textit{tool interface}, \textit{observation}, \textit{context management},
and \textit{workflow}, denoted by \(S,T,O,C,\) and \(W\), respectively.
These components govern the instructions provided to the policy, tool access,
environment feedback, interaction history, and execution procedure,
collectively shaping how the policy perceives, acts, and maintains state
throughout task execution. This factorization provides an operational
decomposition of the harness, while allowing the effects of different
components to interact.

We instantiate each component through a family of component-specific
transformations. For a component \(F\in\{S,T,O,C,W\}\), let \(F^{(0)}\)
denote its base realization and \(\tau_F^{(\ell)}\) the \(\ell\)-th
transformation applied to it:
\begin{equation}
F^{(\ell)}
=
\tau_F^{(\ell)}\left(F^{(0)}\right).
\label{eq:component-transformation}
\end{equation}
A transformation modifies only the target component while preserving the
remaining components, task semantics, and environment. For example, it may
reorganize the system prompt, alter tool schemas or naming conventions,
change observation feedback, modify context retention or compression, or
impose a different execution procedure. Detailed definitions and
implementations are provided in Appendix~\ref{app:harness_variants}.

Let \(\mathcal{S}\), \(\mathcal{T}\), \(\mathcal{O}\), \(\mathcal{C}\),
and \(\mathcal{W}\) denote the resulting realization sets. A complete harness
configuration selects one realization from each set, defining the factorized
harness space
\begin{equation}
\mathcal{H}
=
\mathcal{S}
\times
\mathcal{T}
\times
\mathcal{O}
\times
\mathcal{C}
\times
\mathcal{W}.
\label{eq:harness-space}
\end{equation}
Each configuration \(h\in\mathcal{H}\) therefore specifies a complete execution
setting for the same underlying task and environment.

The product structure of \(\mathcal{H}\) also induces a natural notion of
structural proximity. For \(h_i,h_j\in\mathcal{H}\), the Hamming distance
\(d_{\mathrm H}(h_i,h_j)\) counts the number of component dimensions on
which they differ. Equivalently, \(\mathcal{H}\) forms a Hamming graph whose
adjacent configurations differ in exactly one component. We use this
distance to organize local cross-harness variation during training.

\subsubsection{Generalization Objective}

The central objective of Harness Scaling is to learn policies that remain
effective beyond the harness configurations observed during training. We
consider two complementary forms of cross-harness generalization: broad
combinatorial generalization within the constructed harness space and transfer
to complex real-world black-box harnesses beyond it.

\paragraph{In-Space Combinatorial Generalization.}
Let \(\mathcal{H}_{\mathrm{tr}}\subset\mathcal{H}\) denote the harness
configurations used for training. We evaluate on held-out configurations in
\(\mathcal{H}\setminus\mathcal{H}_{\mathrm{tr}}\), whose complete
combinations are unseen during training. This setting provides broad coverage
of unseen combinations within the controlled factorized space.

\paragraph{Out-of-Space Generalization.}
We further evaluate transfer to complex real-world black-box harnesses whose
execution configurations are not represented in \(\mathcal{H}\). This setting
examines whether training over structured white-box harness variation can
transfer to external execution systems beyond the constructed design space.

\subsection{Harness Scaling with Structured Selection}

As the factorized harness space grows combinatorially with its component
dimensions, exhaustive training can introduce substantial redundancy.
We therefore construct a compact training set that preserves broad factorial
coverage under a fixed optimization budget.

\subsubsection{Factorial Coverage}

The harness space in Eq.~\ref{eq:harness-space} can be viewed as a
factorial design, where each harness component corresponds to a factor
and its realizations correspond to different levels~\cite{rao1947factorial,hedayat1999orthogonal}. To construct a compact
subset without losing basic coverage of this combinatorial space, we preserve
its low-order factorial structure: the representation of individual component
realizations and their pairwise combinations. These correspond to the
first- and second-order projections of the full factorial space, respectively.

\paragraph{Component Balance.}
We first require the realizations of each component to be represented
evenly across the selected configurations. This ensures coverage along
every component dimension without overrepresenting particular realizations.

\paragraph{Pairwise Combination Balance.}
Component balance alone does not control how realizations from different
components are combined. We therefore additionally balance all pairwise
combinations. Indexing the five component dimensions by
\(i,j\in\{1,\ldots,5\}\), let \(N_{ij}(\ell,\ell')\) denote the number
of selected configurations containing the \(\ell\)-th realization of
component \(i\) together with the \(\ell'\)-th realization of component
\(j\). Pairwise balance requires
\begin{equation}
N_{ij}(\ell,\ell')=\lambda,
\label{eq:pairwise-balance}
\end{equation}
where \(\lambda\) is the common occurrence count for every realization
pair of any \(i\neq j\). In factorial-design terminology, this corresponds
to strength-2 balance, under which every pair of realizations from any two
components appears equally often in the selected set.

\subsubsection{Harness Set Construction}
\label{sec:Harness Set Construction}

We construct the training harness set using a strength-2 orthogonal design
over the component realizations in \(\mathcal{H}\). This provides a direct
construction of a compact subset satisfying the component-level and pairwise
balance defined above, without searching over possible subsets of the full
harness space.

The size of such a design is constrained by its pairwise coverage.
Suppose there are \(M\) component dimensions, each with \(L\) realizations,
and every pairwise realization combination occurs \(\lambda\) times.
Since any two components admit \(L^2\) realization combinations, the number
of selected configurations \(N\) satisfies
\begin{equation}
N = \lambda L^2.
\label{eq:oa-size}
\end{equation}
Moreover, a strength-2 orthogonal design must satisfy the Rao
bound~\cite{rao1947factorial,hedayat1999orthogonal},
\begin{equation}
N \geq 1 + M(L-1).
\label{eq:rao-bound}
\end{equation}
Pairwise balance determines the replication structure of the design, while
the Rao bound provides a lower bound on the number of configurations.
These conditions restrict the admissible design sizes.

We instantiate a strength-2 orthogonal design satisfying these constraints,
yielding the training harness set
\(\mathcal{H}_{\mathrm{tr}}\subset\mathcal{H}\). The resulting set preserves
exact component-level and pairwise balance using only a compact subset of the
full factorial space. While these low-order projections are balanced, many
complete harness combinations remain unseen during training, enabling
evaluation of in-space combinatorial generalization. Further details of the
construction and the resulting training configurations are provided in
Appendix~\ref{app:selection_details}.

\subsection{Reinforcement Learning across Harnesses}

Training each task under multiple harnesses exposes the policy to different
execution configurations for the same underlying problem, encouraging the
learned behavior to generalize beyond a single harness. We realize this over
the selected harness set \(\mathcal{H}_{\mathrm{tr}}\) by pairing each task
with a small group of structurally related configurations. The assignment is
designed to combine controlled variation within each task with broad coverage
across the training harness space, and the resulting task--harness instances
form the basis for rollout collection and policy optimization.

\subsubsection{Cross-Harness Task Assignment}

We use a locality-aware strategy to organize how harness configurations are
assigned to tasks. For each task \(x\), we assign \(K\) configurations from
\(\mathcal{H}_{\mathrm{tr}}\). 
An anchor harness
\(h_x \sim \mathrm{Unif}(\mathcal{H}_{\mathrm{tr}})\) is first sampled
uniformly from the training set, and the remaining \(K-1\) configurations
are chosen from \(\mathcal{H}_{\mathrm{tr}} \setminus \{h_x\}\) as those
with the smallest Hamming distance \(d_{\mathrm H}(h_x,h)\) to the anchor.
This associates each task with a
small set of nearby configurations while allowing different tasks to cover
different regions of the training harness space. The resulting harness set
for task \(x\) is
\begin{equation}
\mathcal{H}_x
=
\left\{
h_x,\,
h_x^{(1)},\ldots,h_x^{(K-1)}
\right\},
\label{eq:task-harness-assignment}
\end{equation}
where \(h_x^{(1)},\ldots,h_x^{(K-1)}\) denote the \(K-1\) nearest
configurations to \(h_x\) under \(d_{\mathrm H}\).

This assignment combines local cross-harness variation with broad coverage
over training. For a given task, nearby harnesses provide related but distinct
execution configurations, while random anchor selection distributes different
tasks throughout \(\mathcal{H}_{\mathrm{tr}}\). Each task therefore yields
\(K\) task--harness instances for rollout collection and policy optimization.
When \(K=1\), only the sampled anchor is assigned to the task, while different
tasks remain distributed across \(\mathcal{H}_{\mathrm{tr}}\).

\subsubsection{Multi-Harness Policy Optimization}

Each task--harness instance \((x,h)\) is executed under its assigned
harness, producing a trajectory \(\tau\) and its reward.
The \(K\) harnesses assigned to the same task are used over \(K\)
successive training steps, allowing the policy to receive learning signals
from different execution configurations for the same task.

For a policy-update batch \(\mathcal{B}\) induced by the task--harness
training distribution \(\mathcal{D}_{\mathrm{MH}}\), let
\(N_{\mathcal{B}}\) denote the total number of valid policy-generated
tokens in the batch. We optimize the standard clipped policy objective:
\begin{equation}
\mathcal{L}(\theta)
=
\frac{1}{N_{\mathcal{B}}}
\sum_{(x,h,\tau)\in\mathcal{B}}
\sum_{t=1}^{|\tau|}
M_t
\min\left(
\rho_t(\theta)A_t,\,
\operatorname{clip}\bigl(\rho_t(\theta),1-\epsilon,1+\epsilon\bigr)A_t
\right),
\label{eq:multi-harness-policy-optimization}
\end{equation}
where
\(N_{\mathcal{B}}
=\sum_{(x,h,\tau)\in\mathcal{B}}\sum_{t=1}^{|\tau|}M_t\),
\(\rho_t(\theta)=
\frac{\pi_{\theta}(\tau_t\mid x,h,\tau_{<t})}
{\pi_{\theta_{\mathrm{old}}}(\tau_t\mid x,h,\tau_{<t})}\),
and \(M_t\in\{0,1\}\) masks valid policy-generated tokens.
The advantage \(A_t\) is computed according to the underlying
reinforcement learning algorithm.

Harness Scaling determines the task--harness instances used for rollout
collection, while the optimization within each step follows the original
RL objective. 
We apply the same cross-harness training procedure to both
PPO~\cite{schulman2017proximal} and GRPO~\cite{deepseekmath}.

\section{Experiments}
In this section, we first detail the experimental setup
and then report the results and detailed analysis.

\subsection{Experiment Settings}


\paragraph{Harness Scaling Setup.}
We instantiate five harness components with three realizations each,
yielding \(3^5=243\) configurations in the full harness space and forming
the Hamming graph \(H(5,3)\). Following the strength-2 orthogonal-design
construction in Section~\ref{sec:Harness Set Construction}, the resulting
design contains 18 configurations, which form the training harness set
\(\mathcal{H}_{\mathrm{tr}}\).

\paragraph{Training.}
We conduct experiments using two backbone models, Qwen3-4B-Instruct and
Qwen3-30B-A3B-Instruct~\cite{yang2025qwen3technicalreport}, on ClawGym-Task~\cite{bai2026clawgym}, which contains 13.5K
workspace-based general agent tasks. We consider both GRPO~\cite{deepseekmath} and PPO~\cite{schulman2017proximal}.
For GRPO, each step uses 16 tasks with 8 rollouts per task, while PPO uses
128 tasks with one rollout per task. We set \(K=2\) harnesses per task and train for 300 optimization
steps. Additional implementation details
and hyperparameters are provided in Appendix~\ref{app:training}.

\begin{table*}[t]
\centering
\caption{
Pass@1 performance across three benchmarks under different harness
evaluation settings. \textbf{In-Space} reports the average over 18
held-out harness configurations, while \textbf{Claude Code} denotes
out-of-space evaluation with the external black-box harness.
\textbf{Bold} indicates the better result between single-harness training
and harness scaling under the same model and optimization method.
}
\label{tab:main_results}

\footnotesize
\setlength{\tabcolsep}{4.2pt}
\renewcommand{\arraystretch}{1.12}

\begin{tabular*}{0.9\textwidth}{
@{\extracolsep{\fill}}llccccccc@{}
}
\toprule
\multirow{2}{*}{\textbf{Method}}
& \multirow{2}{*}{\textbf{Harness}}
& \multicolumn{2}{c}{\textbf{ClawGym-Bench}}
& \multicolumn{2}{c}{\textbf{JobBench-Easy}}
& \multicolumn{2}{c}{\textbf{PinchBench}}
& \multirow{2}{*}{\textbf{Avg.}} \\
\cmidrule(lr){3-4}
\cmidrule(lr){5-6}
\cmidrule(lr){7-8}
&
&
\textbf{In-Space}
& \textbf{Claude Code}
& \textbf{In-Space}
& \textbf{Claude Code}
& \textbf{In-Space}
& \textbf{Claude Code}
& \\
\midrule

\rowcolor{gray!12}
\multicolumn{9}{c}{\textbf{Qwen3-4B-Instruct}} \\
\addlinespace[1pt]

Base
& -- & 18.08 & 33.90 & 7.16 & 6.78 & 32.69 & 32.74 & 21.89 \\

\cmidrule{1-9}

\multirow{2}{*}{GRPO}
& Single
& 37.33 & 38.96 & 10.24 & 5.79 & 45.60 & 40.95 & 29.81 \\
& Scaling
& \textbf{52.78} & \textbf{45.37} & \textbf{13.55}
& \textbf{14.83} & \textbf{56.24} & \textbf{51.47}
& \textbf{39.04} \\

\cmidrule(lr){2-9}

\multirow{2}{*}{PPO}
& Single
& 43.78 & 40.75 & 12.63 & 9.68 & 48.86 & 47.74 & 33.91 \\
& Scaling
& \textbf{49.07} & \textbf{44.36} & \textbf{13.16}
& \textbf{13.39} & \textbf{53.75} & \textbf{47.82}
& \textbf{36.93} \\

\midrule

\rowcolor{gray!12}
\multicolumn{9}{c}{\textbf{Qwen3-30B-A3B-Instruct}} \\
\addlinespace[1pt]

Base
& -- & 34.80 & 37.56 & 8.44 & 7.04 & 49.11 & 51.48 & 31.40 \\

\cmidrule{1-9}

\multirow{2}{*}{GRPO}
& Single
& 48.12 & 46.02 & 14.79 & 10.05 & 54.51 & 50.25 & 37.29 \\
& Scaling
& \textbf{58.60} & \textbf{49.62} & \textbf{15.48}
& \textbf{14.20} & \textbf{58.98} & \textbf{59.78}
& \textbf{42.78} \\

\cmidrule(lr){2-9}

\multirow{2}{*}{PPO}
& Single
& 44.23 & 45.17 & 13.98 & 12.11 & 50.54 & 51.85 & 36.31 \\
& Scaling
& \textbf{48.29} & \textbf{48.83} & \textbf{17.56}
& \textbf{18.90} & \textbf{57.59} & \textbf{60.18}
& \textbf{41.89} \\

\bottomrule
\end{tabular*}
\end{table*}

\paragraph{Evaluation.}
We evaluate on three benchmarks with varying domains and difficulty levels.
ClawGym-Bench~\cite{bai2026clawgym} covers workspace-based general agent tasks, PinchBench~\cite{pinchbench2026} focuses
on realistic tasks involving research, coding, scheduling, and file
operations, and JobBench-Easy~\cite{li2026jobbenchaligningagentwork} evaluates more challenging professional
workflows over heterogeneous workspace files.
All evaluation tasks are disjoint from the ClawGym-Task training set.
For in-space generalization, we evaluate on 18 fixed held-out configurations
from \(\mathcal{H}\setminus\mathcal{H}_{\mathrm{tr}}\), selected for balanced
component coverage and diversity across complete harness configurations.
The same held-out set is used across all benchmarks, model scales, and RL
algorithms. For out-of-space generalization, we use Claude Code~\cite{anthropic2026claudecode} as the main
external black-box harness and additionally evaluate on OpenClaw~\cite{openclaw2026}.
We report Pass@1 averaged over three independent evaluation runs, with
in-space scores further averaged across the 18 held-out configurations.
Further evaluation details and OpenClaw results are provided in
Appendix~\ref{app:evaluation}.


\textbf{Baselines.}
We compare against four baselines: the original backbone model without
additional training; single-harness RL, where the policy is optimized
through the fixed base configuration
\(h_{\mathrm{base}}=(S^{(0)},T^{(0)},O^{(0)},C^{(0)},W^{(0)})\);
black-box SFT, which fine-tunes the policy on trajectories collected through
a complex black-box harness; and black-box RL, which optimizes the policy
through the same black-box harness. We use OpenClaw as the black-box training
harness, while Claude Code is reserved as an unseen external harness for
evaluation. All RL baselines use matched optimization budgets where
applicable.

\begin{table*}[t]
\small
\centering
\caption{
Pass@1 performance on Claude Code as an unseen external black-box harness.
Black-Box SFT and Black-Box RL are trained through OpenClaw, while Harness
Scaling is trained over the structured white-box harness space. For the
RL-based methods, both Black-Box RL and Harness Scaling use GRPO.
}
\label{tab:blackbox_comparison}

\footnotesize
\setlength{\tabcolsep}{8pt}
\renewcommand{\arraystretch}{1.12}

\begin{tabular}{@{}lcccc@{}}
\toprule
\textbf{Method}
& \textbf{ClawGym-Bench}
& \textbf{JobBench-Easy}
& \textbf{PinchBench}
& \textbf{Avg.} \\
\midrule

\rowcolor{gray!12}
\multicolumn{5}{c}{\textbf{Qwen3-4B-Instruct}} \\

Black-Box SFT
& 44.48 & 11.10 & 49.66 & 35.08 \\

Black-Box RL
& 44.10 & 9.42 & 46.61 & 33.38 \\

Harness Scaling
& \textbf{45.37}
& \textbf{14.83}
& \textbf{51.47}
& \textbf{37.22} \\

\midrule

\rowcolor{gray!12}
\multicolumn{5}{c}{\textbf{Qwen3-30B-A3B-Instruct}} \\

Black-Box SFT
& 49.55 & 13.16 & 51.52 & 38.08 \\

Black-Box RL
& 47.84 & 12.73 & 48.28 & 36.28 \\

Harness Scaling
& \textbf{49.62}
& \textbf{14.20}
& \textbf{59.78}
& \textbf{41.20} \\

\bottomrule
\end{tabular}
\end{table*}


\subsection{Main Results}

\noindent $\bullet$ \emph{Combinatorial Generalization to Unseen Harnesses.}
Compared with Single-Harness RL, Harness Scaling consistently improves
performance on held-out harness configurations whose complete combinations
are unseen during training, as summarized in Table~\ref{tab:main_results}
and further illustrated by Figure~\ref{fig:eval_curve_whitebox}.
Averaged over the three benchmarks, the in-space gains reach \(+9.80\)
and \(+3.57\) points with GRPO and PPO on Qwen3-4B-Instruct, and
\(+5.21\) and \(+4.90\) points on Qwen3-30B-A3B-Instruct.
The advantage also persists when Single-Harness RL is trained on three
additional fixed harnesses selected solely by their structural positions
in \(\mathcal H_{\mathrm{tr}}\). Detailed sensitivity results are reported
in Appendix~\ref{app:addition_abla}.
These results support the benefit of Harness Scaling for combinatorial
generalization within the constructed harness space.

\begin{wrapfigure}{r}{0.52\columnwidth}
    \vspace{-6pt}
    \centering

    \begin{minipage}[t]{0.48\linewidth}
        \centering
        \includegraphics[width=\linewidth]
        {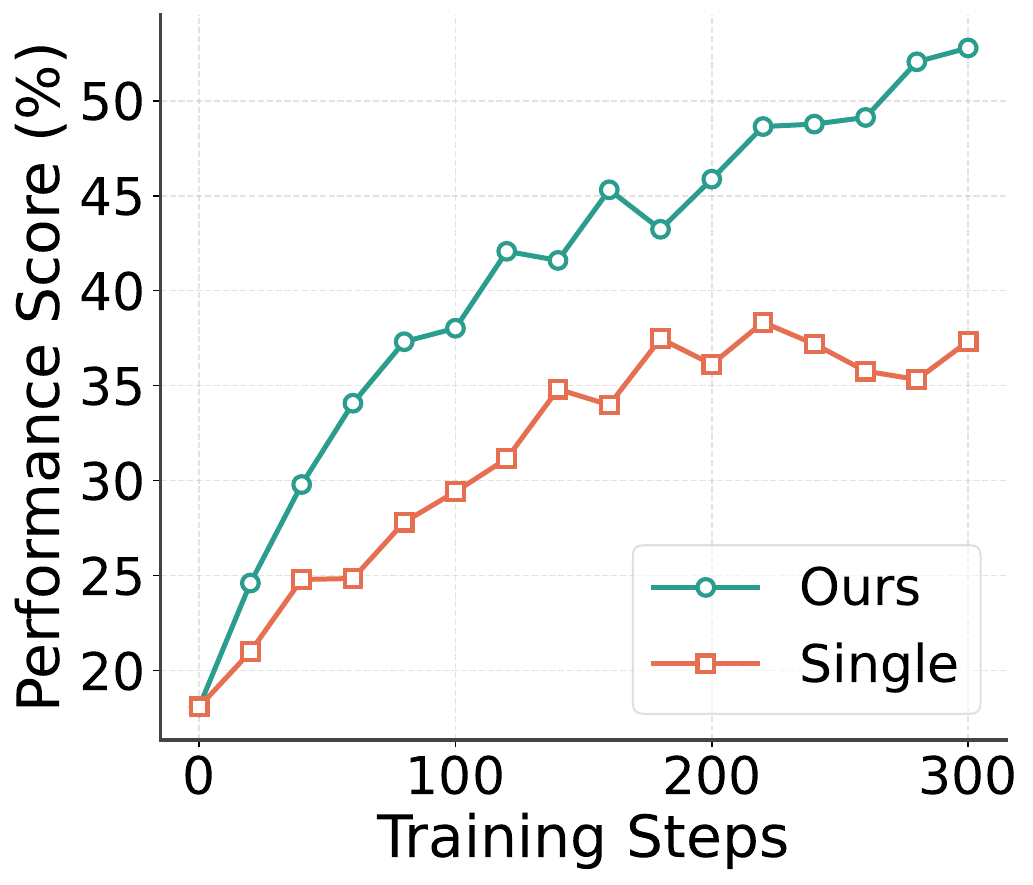}

        \vspace{-2pt}
        {\scriptsize (a) Qwen3-4B-Instruct}
    \end{minipage}
    \hfill
    \begin{minipage}[t]{0.48\linewidth}
        \centering
        \includegraphics[width=\linewidth]
        {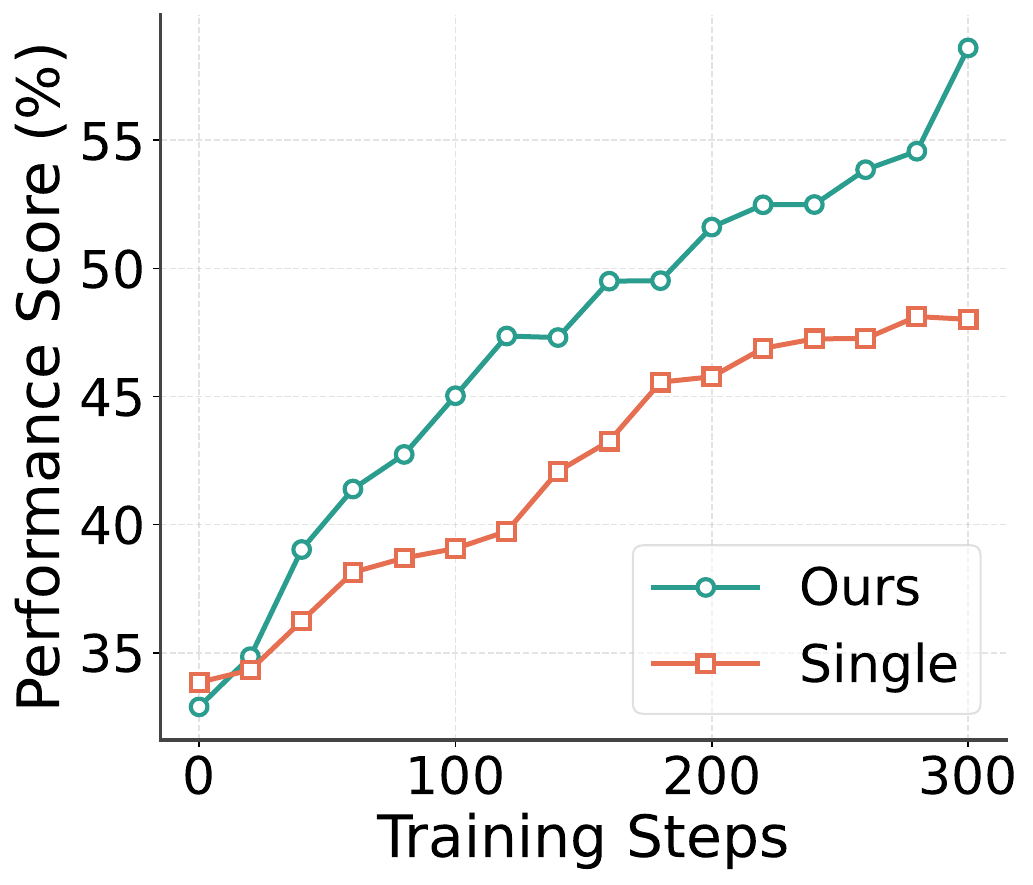}

        \vspace{-2pt}
        {\scriptsize (b) Qwen3-30B-A3B}
    \end{minipage}

    \vspace{-2pt}
    \caption{
    Evaluation curves under in-space harness configurations on ClawGym-Bench.
    }
    \label{fig:eval_curve_whitebox}
    \vspace{-8pt}
\end{wrapfigure}

\noindent $\bullet$ \emph{Consistency across Model Scales and RL Algorithms.}
Harness scaling yields consistent gains across both backbone sizes and optimization methods (Table~\ref{tab:main_results}). For Qwen3-4B-Instruct, the overall average improves from 29.81 to 39.04 with GRPO and from 33.91 to 36.93 with PPO. Similar improvements are observed for Qwen3-30B-A3B-Instruct, from 37.29 to 42.78 with GRPO and from 36.31 to 41.89 with PPO. The improvement therefore persists across both model scale
and RL algorithm rather than being specific to a single training setting.
Additional out-of-space evaluation on OpenClaw shows the same trend in
Appendix~\ref{app:evaluation}.


\noindent $\bullet$ \emph{Transfer to External Black-Box Harnesses.}
Harness Scaling further transfers to the external black-box
harness. 
As shown in
Table~\ref{tab:blackbox_comparison}, it achieves average Pass@1 scores of
37.22 and 41.20 with Qwen3-4B-Instruct and Qwen3-30B-A3B-Instruct,
respectively, exceeding black-box SFT (35.08 and 38.08) and black-box RL
(33.38 and 36.28). Both black-box baselines are trained through OpenClaw
and transferred to Claude Code without further adaptation. This comparison
shows that structured training over diverse white-box harnesses can yield
stronger cross-harness transfer than direct training through a single
complex black-box harness.

\subsection{Ablation Studies}

%








\begin{wraptable}{r}{0.48\columnwidth}
\vspace{-6pt}
\centering
\caption{
Ablation of structured selection and assignment on Qwen3-4B-Instruct.
Each entry reports In-Space / Claude Code.
}
\label{tab:ablation}

\scriptsize
\setlength{\tabcolsep}{2.5pt}
\renewcommand{\arraystretch}{1.08}

\begin{tabular}{@{}lcc@{}}
\toprule
\textbf{Setting}
& \textbf{ClawGym-Bench}
& \textbf{PinchBench} \\
\midrule
Random-18
& 47.90 / 42.78
& 51.28 / 48.74 \\

Random Assign.
& 50.46 / 43.17
& 53.20 / 50.23 \\

\textbf{Ours}
& \textbf{52.78 / 45.37}
& \textbf{56.24 / 51.47} \\
\bottomrule
\end{tabular}

\vspace{-6pt}
\end{wraptable}

We further examine the key design choices in Harness Scaling, including
the construction of the training harness set, the number of harnesses
assigned to each task, and the locality-aware assignment strategy.
For the training harness set, we compare a single harness, three harnesses
providing first-order coverage over all component realizations, our
structured 18-harness set, a random 18-harness set, and an exhaustive
setting using all 243 configurations. Unless otherwise specified,
multi-harness settings use \(K=2\) with the same locality-aware assignment
and training budget. The full-space setting includes the white-box
evaluation configurations and is therefore used as a fixed-budget stress
test rather than a held-out generalization setting. We report the effects
of the training harness set and per-task harness diversity in
Figure~\ref{fig:ablation}. Table~\ref{tab:ablation} further compares our
structured 18-harness set with a random set of the same size, and
locality-aware assignment with random assignment.

\noindent $\bullet$ \emph{Benefits Depend on Harness Selection.}
Moving beyond a single fixed harness substantially improves generalization,
but performance does not increase monotonically with the number of training
configurations. Using three harnesses with first-order component coverage
yields gains of \(+11.43/+4.84\) points on ClawGym-Bench and
\(+3.44/+7.86\) points on PinchBench under in-space/black-box evaluation.
Our structured 18-harness set further improves over the three-harness
setting by \(+4.02/+1.57\) and \(+7.20/+2.66\) points, respectively,
while also outperforming Random-18 and the full 243-harness setting under
the same training budget. Notably, the full-space setting is trained on
all harness configurations, including those used for white-box evaluation,
yet still performs worse than the structured 18-harness set. This suggests
that exhaustive coverage alone does not guarantee better performance under
a fixed optimization budget, and that how harness diversity is organized
matters in addition to configuration count.

\noindent $\bullet$ \emph{Effect of Per-Task Harness Diversity.}
With \(K=1\), each task is associated with only one harness, while
different tasks remain distributed across the selected harness set.
As shown in Figure~\ref{fig:ablation}(c--d), this setting already yields
clear gains over Single-Harness RL, indicating that harness diversity
across tasks provides substantial benefits even without exposing the same
task to multiple harnesses.
Increasing \(K\) from 1 to 2 further improves performance. We additionally
evaluate a matched Repeat-2 control that follows the same two-step task
schedule while reusing the same sampled harness. Repeat-2 shows mixed
changes relative to \(K=1\), while \(K=2\) consistently performs better
than Repeat-2 across both benchmarks and evaluation settings. This suggests
that repeated task exposure alone does not fully account for the improvement
from \(K=2\), and that varying the harness for the same task provides
additional benefit. Increasing \(K\) to 4 provides no further improvement,
with \(K=2\) achieving the best overall performance among the evaluated
settings. Detailed results are provided in
Appendix~\ref{app:addition_abla}.


\noindent $\bullet$ \emph{Locality-Aware Assignment Complements Global Coverage.}
Replacing locality-aware assignment with random assignment reduces
performance by \(2.32/2.20\) points on ClawGym-Bench and
\(3.04/1.24\) points on PinchBench under in-space/black-box evaluation.
This pattern supports the global--local design of Harness Scaling, where
random anchor sampling maintains broad coverage across the training harness
space, while nearest-harness assignment provides controlled local variation
for each task.

\begin{figure*}[t]
    \centering

    \begin{subfigure}[t]{0.24\textwidth}
        \centering
        \includegraphics[width=\linewidth]{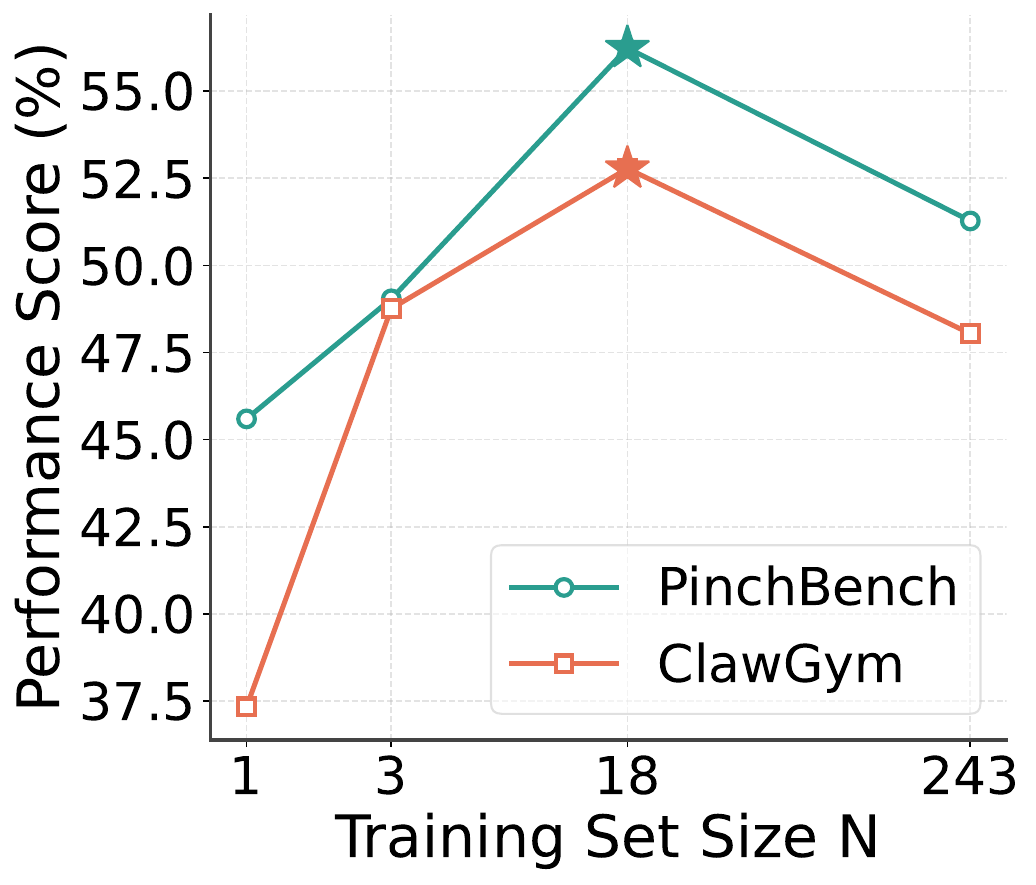}
        \caption{White-Box}
        \label{fig:ablation_a}
    \end{subfigure}
    \hfill
    \begin{subfigure}[t]{0.24\textwidth}
        \centering
        \includegraphics[width=\linewidth]{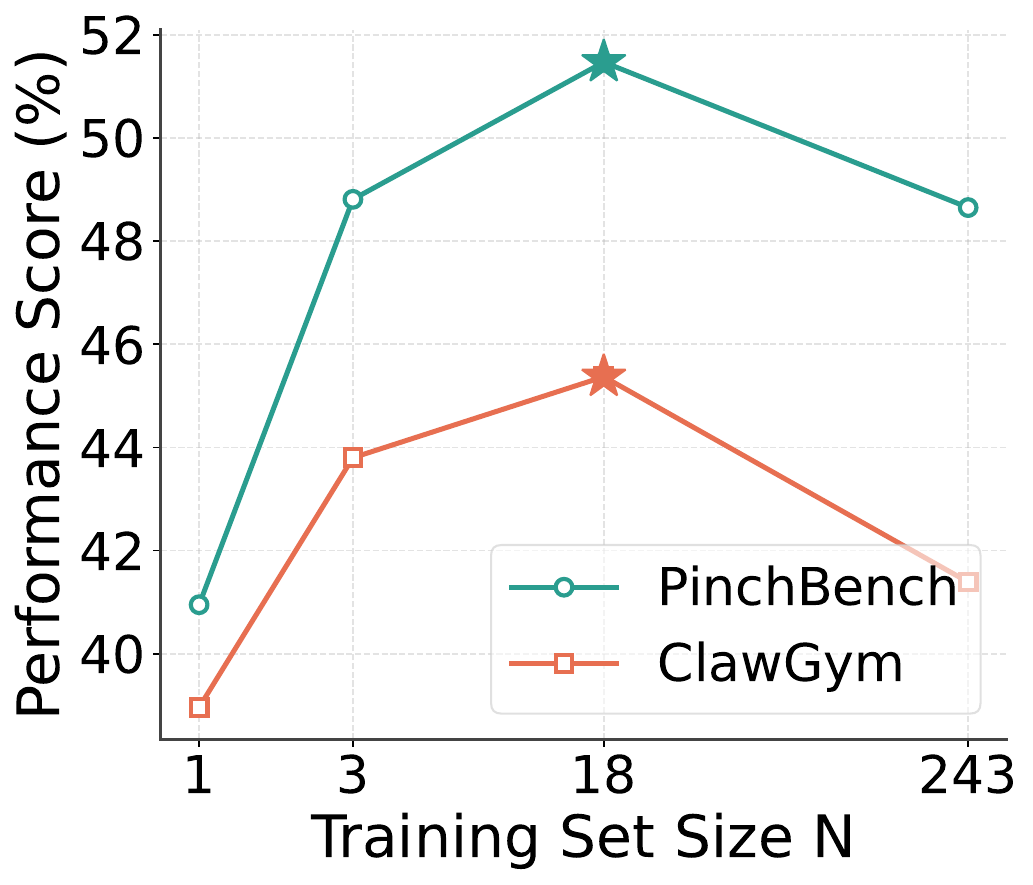}
        \caption{Black-Box}
        \label{fig:ablation_b}
    \end{subfigure}
    \hfill
    \begin{subfigure}[t]{0.24\textwidth}
        \centering
        \includegraphics[width=\linewidth]{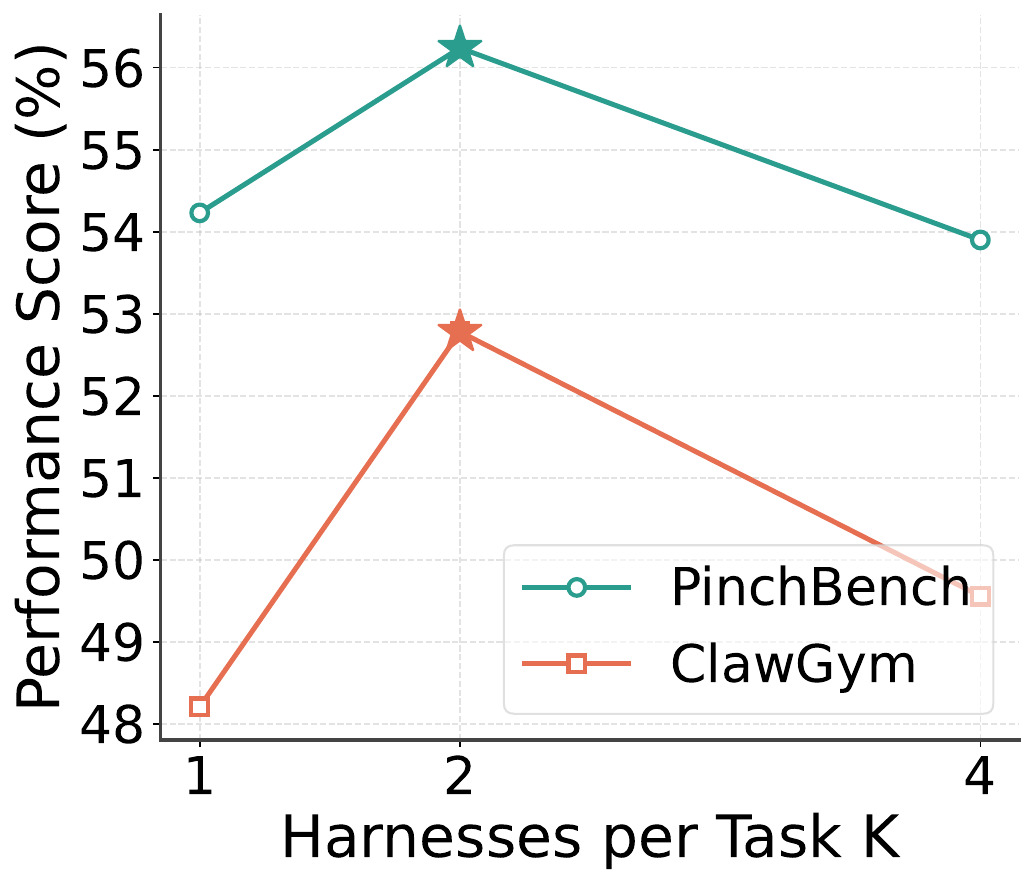}
        \caption{White-Box}
        \label{fig:ablation_c}
    \end{subfigure}
    \hfill
    \begin{subfigure}[t]{0.24\textwidth}
        \centering
        \includegraphics[width=\linewidth]{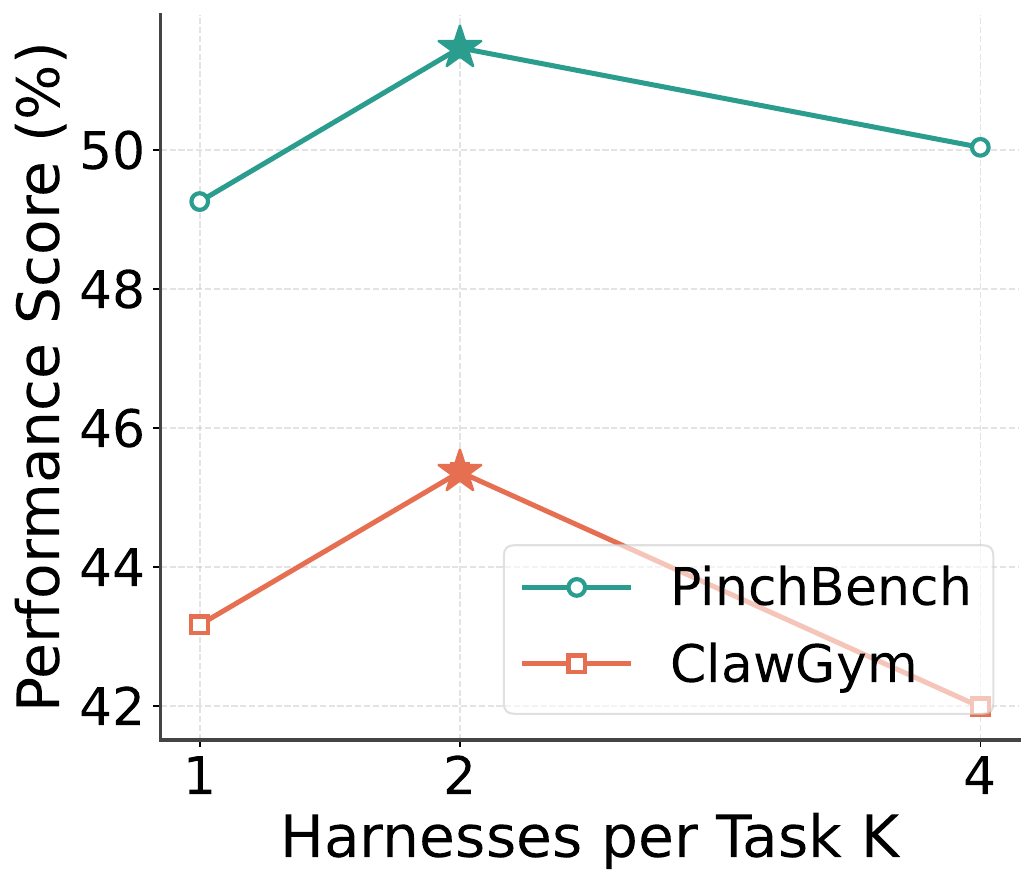}
        \caption{Black-Box}
        \label{fig:ablation_d}
    \end{subfigure}

\caption{
Ablation studies of key design choices in Harness Scaling on
Qwen3-4B-Instruct.
(a--b) Performance across training harness sets of different sizes under
white-box and external black-box~(Claude Code) evaluation. The full 243-harness setting
includes the white-box evaluation configurations and serves as a
fixed-budget stress test rather than a held-out generalization setting.
(c--d) Effect of the number of harnesses assigned per task during training
under the same two evaluation settings.
}

    \label{fig:ablation}
\end{figure*}
\section{Analysis}

We further investigate behavioral patterns associated with cross-harness
generalization. Specifically, we examine whether Harness Scaling is
associated with stronger execution capabilities and more robust performance
as the execution configuration changes. For each model scale and approach,
we analyze 200 complete trajectories per evaluated harness, generated from
the same set of ClawGym-Bench tasks to enable direct comparisons across
approaches and harnesses. More details are provided in
Appendix~\ref{app:analysis_details}.



\subsection{Harness-Related Capability Analysis}

Harness Scaling strengthens capabilities that are directly relevant to
operating under different harness configurations.
Figures~\ref{fig:analysis_a} and~\ref{fig:analysis_b} compare
Single-Harness RL and Harness Scaling along six behavioral dimensions:
instruction following, interface handling, observation reading, state
tracking, workflow compliance, and error recovery.

The largest gains appear in state tracking, workflow compliance, and error
recovery, indicating better maintenance of task progress, adherence to
execution procedures, and recovery from failures. Instruction following
and interface handling also improve, while the gain in observation reading
is relatively smaller. These results suggest that training across harnesses
particularly strengthens capabilities needed to sustain effective execution
as the surrounding harness changes.

\subsection{Cross-Harness Robustness and Specialization}

Harness Scaling makes task performance less sensitive to changes in the
execution configuration. Figures~\ref{fig:analysis_c}
and~\ref{fig:analysis_d} compare paired rewards for the same tasks under
two fixed held-out in-space harnesses: Harness A
(\texttt{S0T2O1C1W2}) and Harness B (\texttt{S2T1O1C2W0}). The two
configurations differ in four harness components
(\(d_{\mathrm H}=4\)) and are used consistently for both model scales and training methods. Each point represents one task, and the diagonal
denotes equal reward under the two configurations.
Under Harness Scaling, the reward distributions shift toward the
upper-right region for both model scales, indicating stronger performance
under both harnesses. The paired rewards also appear more closely aligned
across configurations, suggesting greater consistency as the harness
changes.

\begin{figure*}[t]
    \centering

    \begin{subfigure}[t]{0.24\textwidth}
        \centering
        \includegraphics[width=\linewidth]{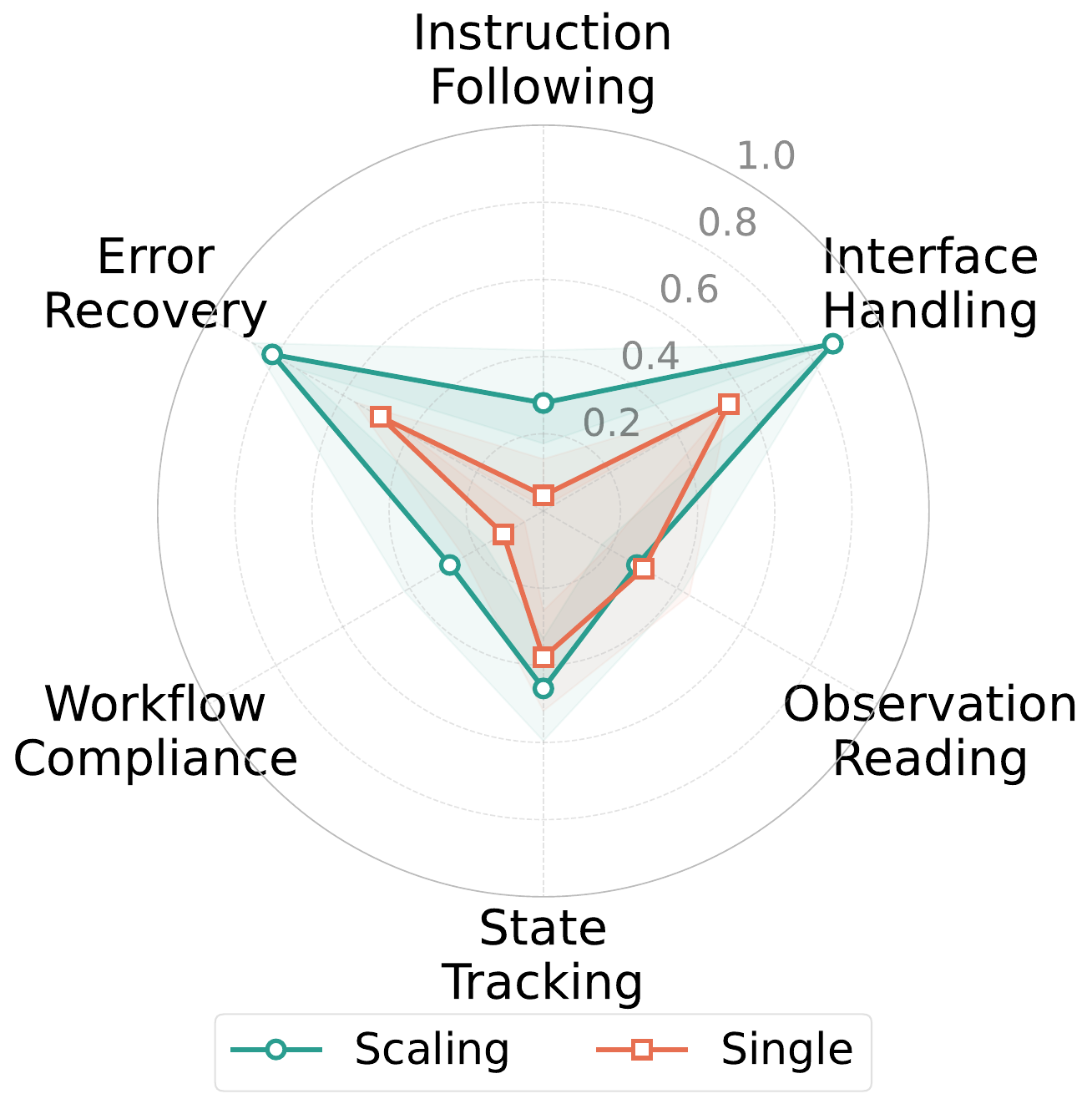}
        \caption{Qwen3-4B}
        \label{fig:analysis_a}
    \end{subfigure}
    \hfill
    \begin{subfigure}[t]{0.24\textwidth}
        \centering
        \includegraphics[width=\linewidth]{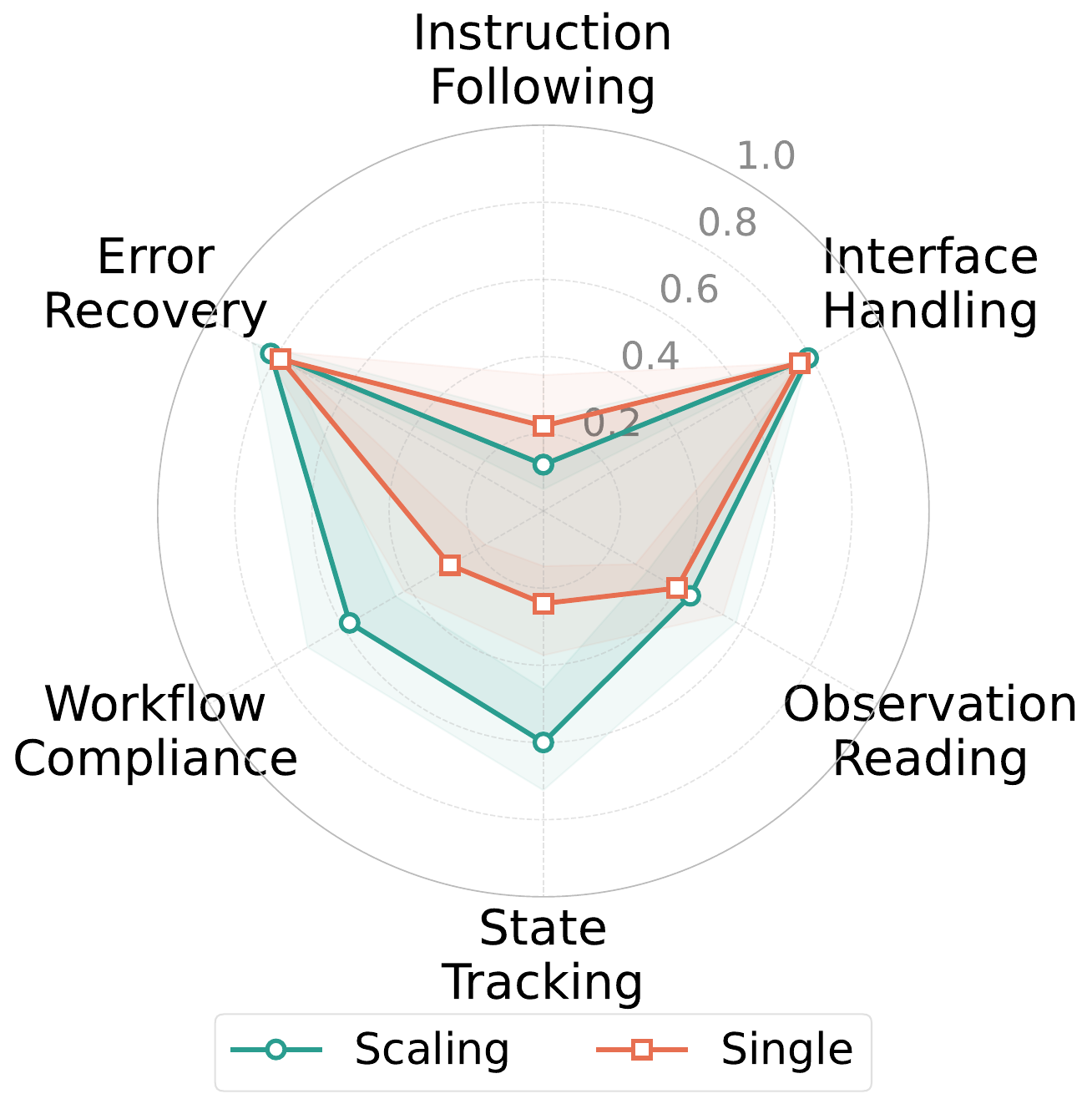}
        \caption{Qwen3-30B-A3B}
        \label{fig:analysis_b}
    \end{subfigure}
    \hfill
    \begin{subfigure}[t]{0.24\textwidth}
        \centering
        \includegraphics[width=\linewidth]{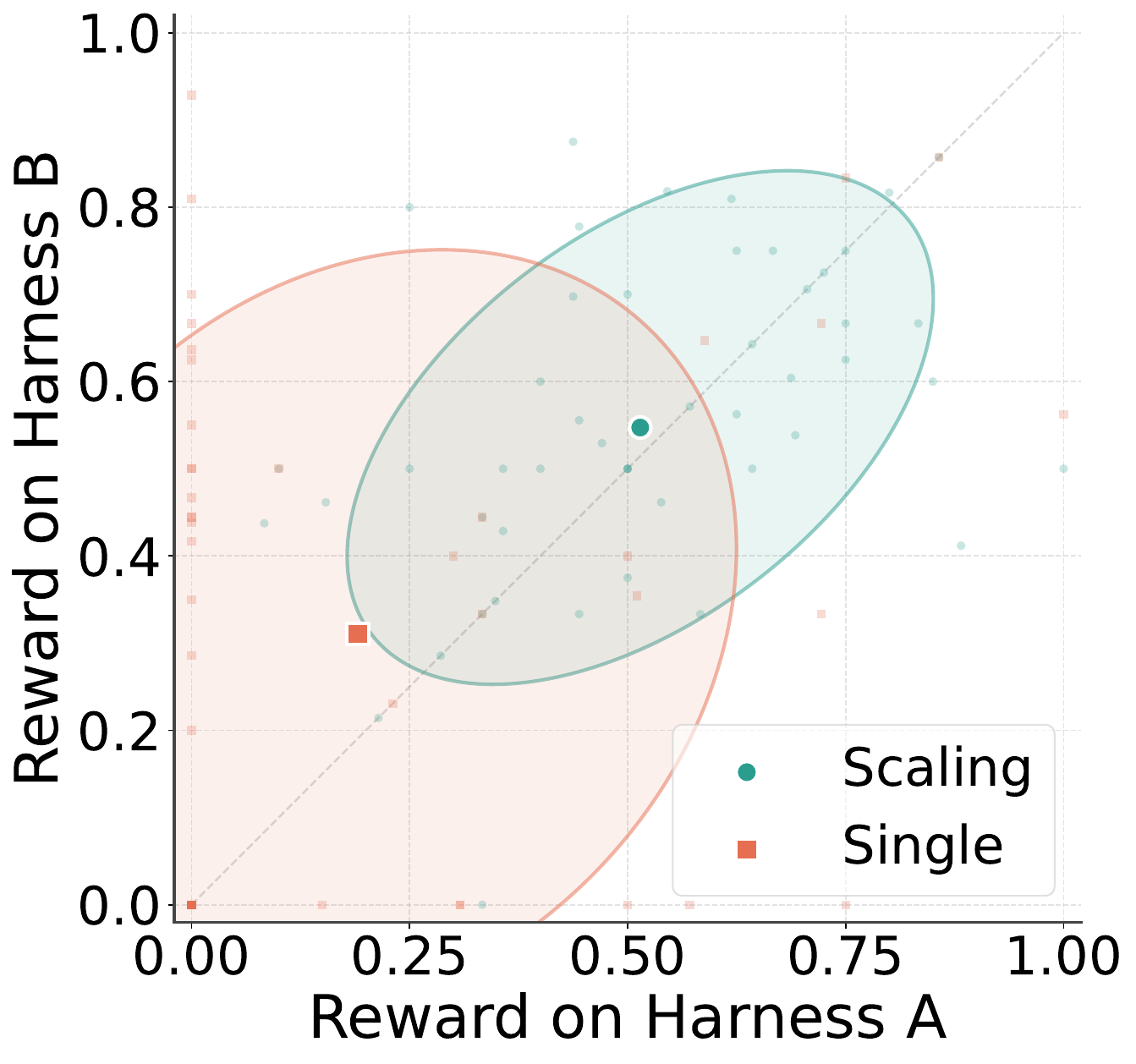}
        \caption{Qwen3-4B}
        \label{fig:analysis_c}
    \end{subfigure}
    \hfill
    \begin{subfigure}[t]{0.24\textwidth}
        \centering
        \includegraphics[width=\linewidth]{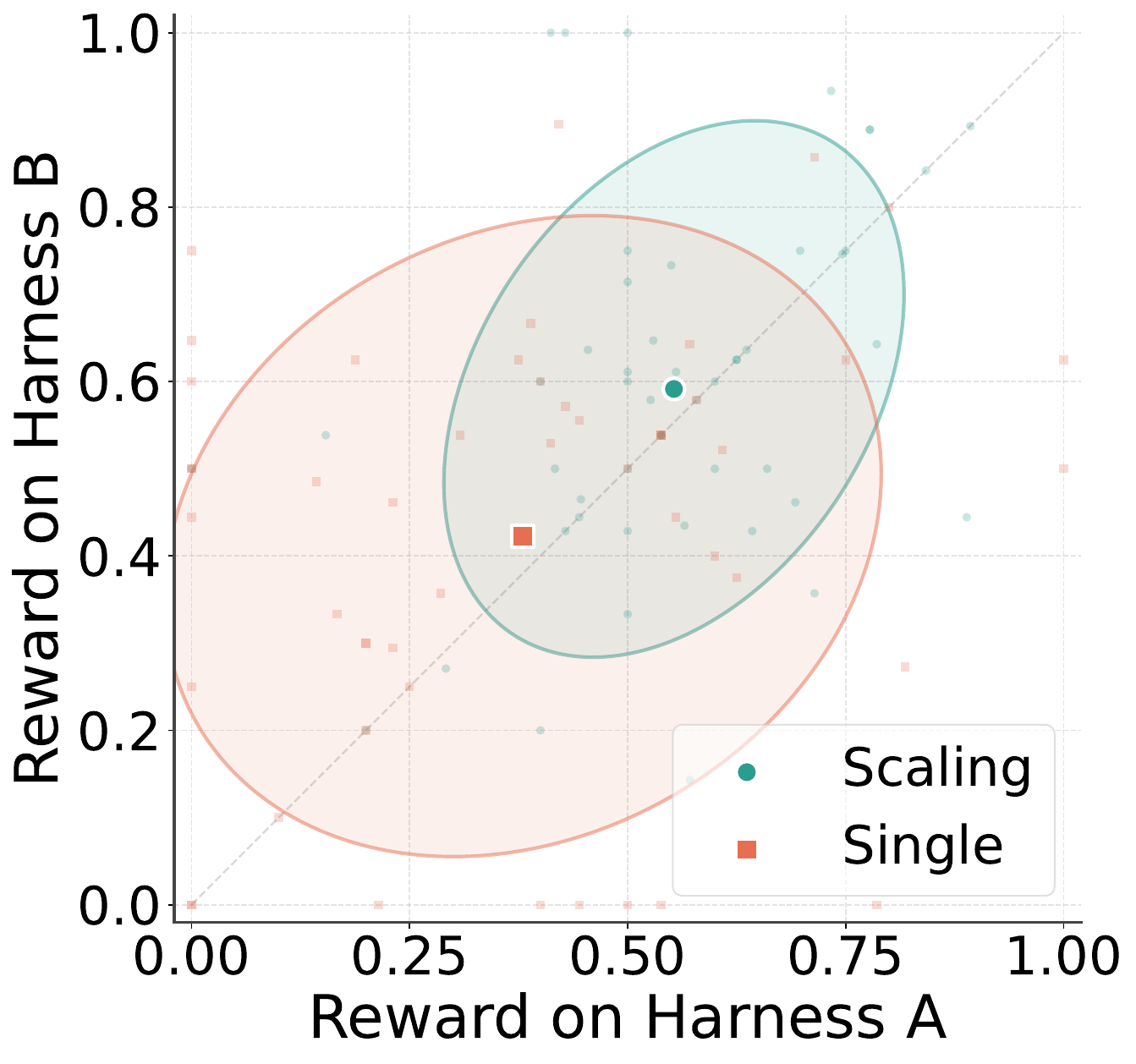}
        \caption{Qwen3-30B-A3B}
        \label{fig:analysis_d}
    \end{subfigure}

\caption{
Behavioral analysis comparing Single-Harness RL and Harness Scaling.
\textbf{(a--b)} Capability profiles under in-space evaluation.
\textbf{(c--d)} Cross-harness reward consistency on the same tasks under
two fixed held-out harnesses, Harness A (\texttt{S0T2O1C1W2}) and Harness B
(\texttt{S2T1O1C2W0}); ellipses show 1.5$\sigma$ covariance regions and
large markers denote group means.
}

    \label{fig:analysis}
\end{figure*}





\begin{wraptable}{r}{0.42\columnwidth}
\vspace{-8pt}
\centering
\caption{
Base-harness vs.\ cross-harness performance for
Qwen3-30B-A3B-Instruct with GRPO on ClawGym-Bench.
}
\label{tab:seen_harness}

\scriptsize
\setlength{\tabcolsep}{1.5pt}
\renewcommand{\arraystretch}{1.05}

\begin{tabular}{@{}lccc@{}}
\toprule
& \textbf{Base Harness}
& \textbf{In-Space}
& \textbf{Claude Code} \\
\midrule
Single
& 59.82
& 48.12
& 46.02 \\
Scaling
& 59.63
& 58.60
& 49.62 \\
\midrule
$\Delta$
& $-0.19$
& \textbf{$+10.48$}
& \textbf{$+3.60$} \\
\bottomrule
\end{tabular}

\vspace{-7pt}
\end{wraptable}

We further examine whether improved cross-harness generalization comes at
the cost of performance on the base training harness. As shown in
Table~\ref{tab:seen_harness}, Harness Scaling incurs only a 0.19-point
decrease on the base harness, while improving the ClawGym-Bench in-space
score by 10.48 points and Claude Code performance by 3.60 points. The large
cross-harness gains relative to the negligible change on the base harness
suggest that Harness Scaling improves robustness to execution variation
while largely preserving performance under the original training
configuration.
\section{Conclusion}


In this work, we systematically study Harness Scaling as a distinct scaling dimension for agentic reinforcement learning, focusing on how harness diversity can be organized to improve cross-harness generalization. We formalize harness diversity as a structured configuration space and develop a framework that integrates structured harness selection with cross-harness policy training. Across benchmarks, model scales, and RL algorithms, Harness Scaling substantially improves generalization to unseen white-box configurations and transfers effectively to unseen external black-box harnesses. Our results further show that the benefits of scaling depend on how harness diversity is selected and incorporated into training, rather than on diversity alone. More broadly, these findings expand the scope of agentic RL scaling beyond what agents solve and where they operate to how they interact and execute, positioning harness variation as a complementary dimension for improving generalization across diverse execution systems.


\bibliography{ref}
\newpage
\appendix

\section{Harness Component Variants}
\label{app:harness_variants}


We instantiate the factorized harness space through a model-driven
transformation procedure. Starting from the base harness, Opus-4.7 generates
alternative realizations for each of the five components: system prompt
(\(S\)), tool interface (\(T\)), observation (\(O\)), context management
(\(C\)), and workflow (\(W\)). Each transformation targets a single
component and introduces a specified form of variation while preserving the
underlying task, environment, and functionality of the remaining harness
components.

The transformation procedure explicitly constrains non-target components to
remain unchanged in task semantics, environment behavior, and functionality,
so that each resulting realization primarily reflects variation along the
designated component dimension.

This procedure yields three realizations per component and
\(3^5=243\) complete harness configurations.
Table~\ref{tab:harness_variants} summarizes the resulting variants.

\begin{table*}[t]
\small
\centering
\caption{
Harness component realizations instantiated by the model-driven
transformation procedure for the \(3^5=243\) factorized harness space.
}
\label{tab:harness_variants}

\footnotesize
\setlength{\tabcolsep}{4.2pt}
\renewcommand{\arraystretch}{1.12}

\begin{tabularx}{0.9\textwidth}{
@{}
p{0.045\textwidth}
p{0.11\textwidth}
YYY
@{}
}
\toprule
\textbf{Dim.}
& \textbf{Aspect}
& \textbf{Base}
& \textbf{Variant I}
& \textbf{Variant II} \\
\midrule

\(S\)
& System prompt
& Direct system instruction specifying the
agent role and execution rules.
& The same requirements are embedded in a simulated product context, with additional platform and UI descriptions.
& The requirements are reframed through a narrative and role-based setting. \\

\addlinespace[2pt]

\(T\)
& Tool interface
& Original tool names and schemas, such as
\texttt{execute\_bash} and \texttt{file\_editor}, without distractors.
& Functional tools are exposed through semantically related aliases,
together with additional non-functional distractor tools.
& Functional tools are exposed through opaque identifiers and augmented
with distractor tools, largely removing lexical cues to tool functionality. \\

\addlinespace[2pt]

\(O\)
& Observation
& Raw tool outputs without additional prefixes or suffixes.
& Outputs are augmented with simulated infrastructure metadata, including
pod names, namespaces, and scheduling information.
& Outputs are augmented with structured XML-style metadata, including
tool name, exit code, process ID, duration, and execution node. \\

\addlinespace[2pt]

\(C\)
& Context management
& The complete interaction history remains available in the active context.
& LLM-based compaction periodically summarizes earlier history while
retaining the five most recent turns in context.
& A more aggressive compaction policy retains the three most recent turns,
with summarized history externalized to an auxiliary file. \\

\addlinespace[2pt]

\(W\)
& Workflow
& Standard free-form execution without additional procedural constraints.
& A plan-first workflow requires explicit planning before task execution.
& Planning is combined with periodic progress verification every three
steps. \\

\bottomrule
\end{tabularx}
\end{table*}

\paragraph{System Prompt (\(S\)).}
This dimension varies how the same operational requirements are presented
to the policy. The transformed realizations range from direct instructions
to product-oriented and narrative formulations while preserving the
underlying execution requirements.

\paragraph{Tool Interface (\(T\)).}
The base interface provides five functional tool families: file
reading/writing/editing, Bash execution, code interpretation, search, and
web fetching. The transformation varies how these tools are exposed while
preserving their underlying functionality. Semantic aliases retain partial
cues to tool usage, whereas opaque identifiers substantially reduce such
cues; distractor tools further increase interface complexity.

\paragraph{Observation (\(O\)).}
This dimension varies the representation of tool feedback. Simulated
infrastructure or structured execution metadata is introduced around the
original output, while the task-relevant tool result remains unchanged.

\paragraph{Context Management (\(C\)).}
This dimension varies how interaction history remains accessible during
long-horizon execution, ranging from full retention to in-context
summarization and more aggressive externalized compaction. Further details
on how compaction participates in policy training are provided in
Appendix~\ref{app:training_evaluation}.

\paragraph{Workflow (\(W\)).}
This dimension varies the procedural structure of task execution, ranging
from free-form interaction to explicit planning and periodic progress
verification while preserving the underlying task and available actions.

\section{Additional Details on Structured Harness Selection}
\label{app:selection_details}

\subsection{Orthogonal Design and Coverage}

We provide additional details on the structured harness selection in
Section~\ref{sec:Harness Set Construction}. Recall that the harness space
contains \(M\) component dimensions, each with \(L\) realizations. We
construct the training harness set using a strength-2 orthogonal design,
under which every realization pair from any two component dimensions occurs
equally often.

\begin{proposition}[Coverage of a Strength-2 Orthogonal Design]
\label{prop:oa_coverage}
Consider a strength-2 orthogonal design over \(M\) component dimensions,
each with \(L\) realizations. If every pairwise realization combination
occurs exactly \(\lambda\) times, then every realization of each component
appears exactly \(\lambda L\) times, and the total number of selected
configurations satisfies
\begin{equation}
N = \lambda L^2.
\end{equation}
\end{proposition}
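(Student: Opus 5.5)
The plan is a double-counting argument on the $N\times M$ array whose rows are the selected configurations and whose columns are the component dimensions. I will assume $M\ge 2$, since the pairwise condition in Eq.~\ref{eq:pairwise-balance} is only meaningful when there are at least two components. Throughout, $N_{ij}(\ell,\ell')$ denotes the pair counts defined earlier, and $N_i(\ell)$ denotes the number of selected configurations whose component $i$ takes realization $\ell$.

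\emph{First}, fix two distinct components $i\neq j$ and a realization $\ell$ of component $i$. Every configuration counted in $N_i(\ell)$ takes exactly one realization $\ell'$ of component $j$. The sets of configurations indexed by the different $\ell'$ therefore partition the configurations counted in $N_i(\ell)$, which gives
\begin{equation}
N_i(\ell)=\sum_{\ell'=1}^{L} N_{ij}(\ell,\ell') = L\lambda ,
\end{equation}
using $N_{ij}(\ell,\ell')=\lambda$ for every pair. Because $M\ge 2$, such a partner $j$ exists for every $i$. This yields the first claim, and it holds for every component $i$ and every realization $\ell$.

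\emph{Second}, every configuration takes exactly one realization of component $i$. Summing over the $L$ realizations of component $i$ therefore counts each selected configuration exactly once, so
\begin{equation}
N=\sum_{\ell=1}^{L} N_i(\ell)=L\cdot\lambda L=\lambda L^2 .
\end{equation}
The same identity also follows directly by summing $N_{ij}(\ell,\ell')$ over all $L^2$ realization pairs for a fixed pair $(i,j)$.

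There is no real obstacle in this argument. The only points needing care are the implicit assumption $M\ge 2$ and the observation that each configuration contributes exactly one realization per component, which is what justifies the partitions used above. The selected configurations should be treated as a multiset (rows of an array), so that counts with repetition are handled correctly. The Rao bound in Eq.~\ref{eq:rao-bound} is not needed for this proposition.
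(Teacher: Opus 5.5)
Your proof is correct and uses essentially the same double-counting argument as the paper. Fixing a realization of one component and summing the pair counts over a second component gives $\lambda L$, and partitioning the selected configurations by the $L^2$ realization pairs (or, as you also do, by the $L$ realizations of one component) gives $N=\lambda L^2$; your remarks that this needs $M\ge 2$ and should treat the design as a multiset are sensible refinements the paper leaves implicit.
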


\begin{proof}
Consider any two component dimensions. By the strength-2 property, each of
their \(L^2\) realization pairs occurs exactly \(\lambda\) times. Fixing one
realization of the first component and summing over all \(L\) realizations
of the second shows that the fixed realization appears exactly
\(\lambda L\) times. Therefore, every realization within each component is
represented equally often.

Moreover, the \(L^2\) realization pairs between any two components
partition the selected configurations. Since each pair occurs
\(\lambda\) times, the total number of configurations is
\[
N = \lambda L^2.
\]
\end{proof}

A strength-2 orthogonal design additionally satisfies the classical Rao
bound~\cite{rao1947factorial,hedayat1999orthogonal},
\begin{equation}
N \geq 1 + M(L-1).
\label{eq:rao_bound_appendix}
\end{equation}
The bound provides a lower bound on the number of configurations required
for a strength-2 orthogonal design.

In our harness space, we use \(M=5\) component dimensions with
\(L=3\) realizations per component. Proposition~\ref{prop:oa_coverage}
requires
\begin{equation}
N = 9\lambda,
\end{equation}
while the Rao bound gives
\begin{equation}
N \geq 1 + 5(3-1) = 11.
\end{equation}
Since \(N\) must be a multiple of \(9\), the smallest design size permitted
by these constraints is \(N=18\). We instantiate a corresponding
strength-2 orthogonal design with \(\lambda=2\). Under this design, each
realization of every harness component appears six times, while every
realization pair across any two components appears exactly twice.

The resulting 18-harness set provides balanced first-order and pairwise
coverage of the full factorial space while leaving many higher-order
configurations unseen during training. This provides a compact training set
for Harness Scaling while retaining held-out combinations for evaluating
in-space combinatorial generalization.

\subsection{Training Harness Set}

The resulting structured training set consists of the 18 harness
configurations listed in Table~\ref{tab:training_harness_set}. 

\begin{table}[h]
\centering
\caption{The 18 configurations in the structured training harness set
\(\mathcal{H}_{\mathrm{tr}}\).}
\label{tab:training_harness_set}
\scriptsize
\setlength{\tabcolsep}{6pt}
\renewcommand{\arraystretch}{1.03}
\begin{tabular}{lll}
\toprule
\texttt{S0T0O0C0W0} & \texttt{S0T1O1C1W1} & \texttt{S0T2O2C2W2} \\
\texttt{S1T0O0C1W1} & \texttt{S1T1O1C2W2} & \texttt{S1T2O2C0W0} \\
\texttt{S2T0O1C0W2} & \texttt{S2T1O2C1W0} & \texttt{S2T2O0C2W1} \\
\texttt{S0T0O2C2W1} & \texttt{S0T1O0C0W2} & \texttt{S0T2O1C1W0} \\
\texttt{S1T0O1C2W0} & \texttt{S1T1O2C0W1} & \texttt{S1T2O0C1W2} \\
\texttt{S2T0O2C1W2} & \texttt{S2T1O0C2W0} & \texttt{S2T2O1C0W1} \\
\bottomrule
\end{tabular}
\end{table}

\section{Implementation Details}
\label{app:training_evaluation}

\subsection{Training}
\label{app:training}
\paragraph{Training Configuration.}
We train all models with veRL~\cite{Sheng_2025} using asynchronous vLLM~\cite{kwon2023efficientmemorymanagementlarge} rollouts on NVIDIA
H20 GPUs, using 16 GPUs for Qwen3-4B-Instruct and 32 GPUs for Qwen3-30B-A3B-Instruct.
Table~\ref{tab:training_hparams} summarizes the main training
hyperparameters. For GRPO, each update uses 16 tasks with eight rollouts
per task, while PPO uses 128 tasks with one rollout per task. We use an
actor learning rate of \(1\times10^{-6}\), asymmetric clipping ratios of
0.2 and 0.28, and no KL or entropy regularization. Dynamic batching is
enabled throughout training. For PPO, we use GAE with
\(\gamma=\lambda=1\); the critic masks observation tokens from value
regression, and its value head is initialized to zero.

\begin{table}[t]
\small
\centering
\caption{
Main training hyperparameters for GRPO and PPO experiments.
}
\label{tab:training_hparams}

\footnotesize
\setlength{\tabcolsep}{4.2pt}
\renewcommand{\arraystretch}{1.12}

\begin{tabular*}{0.9\textwidth}{
@{\extracolsep{\fill}}lcccc@{}
}
\toprule
\textbf{Parameter}
& \textbf{4B GRPO}
& \textbf{4B PPO}
& \textbf{30B-A3B GRPO}
& \textbf{30B-A3B PPO} \\
\midrule

Prompt batch size
& 16 & 128 & 16 & 128 \\

Rollouts per prompt
& 8 & 1 & 8 & 1 \\

Actor mini-batch size
& 16 & 128 & 16 & 128 \\

Actor learning rate
& \(1\times10^{-6}\)
& \(1\times10^{-6}\)
& \(1\times10^{-6}\)
& \(1\times10^{-6}\) \\

Max. prompt length
& 8K & 8K & 4K & 4K \\

Max. response length
& 32K & 32K & 60K & 60K \\

Actor clip ratio (low/high)
& 0.20 / 0.28
& 0.20 / 0.28
& 0.20 / 0.28
& 0.20 / 0.28 \\

GAE \(\gamma / \lambda\)
& -- & 1.0 / 1.0
& -- & 1.0 / 1.0 \\

Critic learning rate
& -- & \(1\times10^{-5}\)
& -- & \(1\times10^{-5}\) \\

Value clipping
& -- & 0.5
& -- & 1.0 \\

Training GPUs
& 16 & 16 & 32 & 32 \\

\bottomrule
\end{tabular*}
\end{table}


\paragraph{Training with Context Compaction.}
Context compaction can split a single rollout into multiple training
segments. All segments originating from the same rollout share the same
rollout-level reward and participate in policy optimization, causing the
number of training segments in a batch to vary dynamically. Segment
expansion does not alter the rollout-level grouping used by GRPO: for a
group of eight original rollouts, reward normalization is always computed
over the eight rollout rewards, regardless of how many segments each
rollout produces. Segments inherit the group identity and normalized
advantage of their parent rollout and do not introduce additional entries
when computing the group mean or standard deviation.
Policy loss is aggregated with token-level mean reduction over all valid
policy-generated tokens in the batch. Each valid token therefore
contributes once to the loss regardless of how its parent rollout is
segmented, so splitting a rollout into multiple segments does not introduce
additional sample-level weight. This preserves both the rollout-level
reward semantics and consistent token-level weighting while allowing all
valid post-compaction segments to participate in training.


\paragraph{Black-Box Baselines.}
For the black-box baselines, we use OpenClaw v2026.4.5 as the training
harness and reserve Claude Code v2.1.205 as an unseen external harness
for evaluation. Black-Box SFT and Black-Box RL represent two complementary
adaptation strategies to the same single black-box harness: the former
fine-tunes the policy on 12,800 trajectories collected on ClawGym-Task
through OpenClaw for three epochs, while the latter performs online policy
optimization through OpenClaw on the same training set for 300 updates
with 128 rollouts per update. This gives both settings approximately
38,400 trajectory-level training exposures, although SFT reuses a fixed
trajectory set across epochs whereas RL collects trajectories online.
For the comparison in Table~\ref{tab:blackbox_comparison}, Black-Box RL
uses GRPO, and Harness Scaling uses the corresponding GRPO-trained
checkpoint. Both RL settings use the same backbone models, matched
optimization budgets, the same main GRPO hyperparameters, and the same
evaluation protocol.

\subsection{Evaluation}
\label{app:evaluation}


\paragraph{In-Space Aggregation.}
For compact notation, indices \(0\), \(1\), and \(2\) denote the base
realization, Variant I, and Variant II of each harness component,
respectively. We evaluate each checkpoint on the 18 held-out harness
configurations listed in Table~\ref{tab:heldout_harness_set}.
\begin{table}[h]
\centering
\caption{
The 18 held-out configurations used for in-space evaluation.
}
\label{tab:heldout_harness_set}
\scriptsize
\setlength{\tabcolsep}{5pt}
\renewcommand{\arraystretch}{1.03}
\begin{tabular}{lll}
\toprule
\texttt{S0T2O1C1W2} & \texttt{S2T1O1C2W0} & \texttt{S0T1O0C2W2} \\
\texttt{S2T2O0C0W1} & \texttt{S0T1O2C0W1} & \texttt{S1T0O0C2W1} \\
\texttt{S0T0O0C1W2} & \texttt{S0T0O1C0W1} & \texttt{S0T2O2C2W0} \\
\texttt{S1T0O1C1W0} & \texttt{S1T0O2C2W2} & \texttt{S1T1O0C0W0} \\
\texttt{S1T2O1C0W2} & \texttt{S1T2O2C1W1} & \texttt{S2T0O2C0W0} \\
\texttt{S2T1O1C1W2} & \texttt{S2T1O2C2W1} & \texttt{S2T2O0C1W0} \\
\bottomrule
\end{tabular}
\end{table}
The 18 evaluation configurations are fixed across benchmarks, model
scales, and RL algorithms, and are excluded from training in all main
experiments and ablations except the full 243-harness setting. In the
full-space setting, they are included in training by construction; its
white-box result is therefore interpreted as a fixed-budget comparison
rather than held-out generalization.
The evaluation set provides exact marginal balance across component
realizations: for every harness component, each of its three realizations
appears exactly six times. The selected configurations also span diverse
combinations within the factorized space, with pairwise Hamming distances
ranging from 2 to 5. The set is selected solely based on harness
configurations and does not use model performance. We compute the benchmark
score separately under each of the 18 configurations and average the scores
to obtain the reported white-box evaluation performance. For settings in
which these configurations are excluded from training, this corresponds to
the reported In-Space performance.

\paragraph{Scoring.}
We report Pass@1 averaged over three independent evaluation runs. For each
run, every task is executed once under the corresponding harness setting,
and the resulting task-level scores are aggregated following the benchmark
evaluation protocol. The reported score is then obtained by averaging the
Pass@1 results across the three runs. For rubric-based evaluation requiring
a model judge, we use GPT-5.4 with the same judging protocol across all
methods and harness settings.

\paragraph{Additional Out-of-Space Evaluation.}
We further evaluate the white-box-trained policies on OpenClaw as
an additional external black-box harness. OpenClaw is not included
in the factorized harness space or used during the training of
Single-Harness RL and Harness Scaling, providing a complementary
out-of-space evaluation beyond Claude Code. We report results using
the GRPO-trained checkpoints, consistent with the black-box comparison in
Table~\ref{tab:blackbox_comparison}. As shown in
Table~\ref{tab:openclaw_eval}, Harness Scaling consistently improves
performance over Single-Harness RL across both model scales and all three
benchmarks.

\begin{table}[t]
\centering
\caption{
Out-of-space performance on OpenClaw as an additional external
black-box harness. OpenClaw is not used during the training of
Single-Harness RL or Harness Scaling. All results use GRPO-trained
checkpoints.
}
\label{tab:openclaw_eval}

\small
\setlength{\tabcolsep}{5pt}
\renewcommand{\arraystretch}{1.08}

\begin{tabular}{lcccc}
\toprule
\textbf{Method}
& \textbf{ClawGym-Bench}
& \textbf{JobBench-Easy}
& \textbf{PinchBench}
& \textbf{Avg.} \\
\midrule

\multicolumn{5}{l}{\textit{Qwen3-4B-Instruct}} \\
Single-Harness RL
& 43.18 & 7.67 & 45.20 & 32.02 \\
Harness Scaling
& \textbf{46.84} & \textbf{11.25} & \textbf{50.11} & \textbf{36.07} \\

\midrule

\multicolumn{5}{l}{\textit{Qwen3-30B-A3B-Instruct}} \\
Single-Harness RL
& 45.31 & 11.90 & 52.78 & 36.66 \\
Harness Scaling
& \textbf{50.16} & \textbf{14.81} & \textbf{58.62} & \textbf{41.20} \\

\bottomrule
\end{tabular}
\end{table}

\subsection{Additional Ablation Details}
\label{app:addition_abla}

\paragraph{Per-Task Harness Diversity.}
Table~\ref{tab:k_ablation} reports the numerical results for the
per-task harness diversity study. We additionally include a Repeat-2
control that matches the two-step task schedule of \(K=2\) while using
the same sampled anchor harness for both successive training steps.
Relative to \(K=1\), Repeat-2 produces mixed changes across benchmarks,
suggesting that repeated exposure to the same task--harness pair is not
uniformly beneficial. In contrast, \(K=2\) consistently outperforms
Repeat-2 across both benchmarks and evaluation settings, with gains of
\(3.02/1.77\) points on ClawGym-Bench and \(5.90/2.83\) points on
PinchBench under in-space/Claude Code evaluation. These results provide
additional evidence that cross-harness variation for the same task
contributes beyond repeated task exposure alone.

\begin{table}[t]
\centering
\caption{
Effect of per-task harness diversity on Qwen3-4B-Instruct with GRPO.
\textit{Repeat-2} matches the two-step task schedule of \(K=2\) while
using the same sampled anchor harness for both steps. Each entry reports
In-Space / Claude Code performance.
}
\label{tab:k_ablation}

\small
\setlength{\tabcolsep}{5pt}
\renewcommand{\arraystretch}{1.08}

\begin{tabular}{lcc}
\toprule
\textbf{Setting}
& \textbf{ClawGym-Bench}
& \textbf{PinchBench} \\
\midrule

Single-Harness RL
& 37.33 / 38.96
& 45.60 / 40.95 \\

Harness Scaling, \(K=1\)
& 48.21 / 43.17
& 54.23 / 49.26 \\

Repeat-2, Same Anchor
& 49.76 / 43.60
& 50.34 / 48.64 \\

Harness Scaling, \(K=2\)
& \textbf{52.78} / \textbf{45.37}
& \textbf{56.24} / \textbf{51.47} \\

Harness Scaling, \(K=4\)
& 49.56 / 41.99
& 53.90 / 50.04 \\

\bottomrule
\end{tabular}
\end{table}

\paragraph{Sensitivity to the Single-Harness Configuration.}
Our main Single-Harness RL baseline uses the base harness
\(h_{\mathrm{base}}\). To examine sensitivity to this choice, we
additionally train three fixed-harness baselines selected from
\(\mathcal H_{\mathrm{tr}}\) using only the geometry of the factorized
harness space, without reference to evaluation performance.
Specifically, we consider
\(h_{\mathrm{base}}=(0,0,0,0,0)\),
\(h_1=(1,1,1,2,2)\),
\(h_2=(2,1,2,1,0)\), and
\(h_3=(2,2,1,0,1)\),
where the coordinates follow the order \((S,T,O,C,W)\).
Among four-harness subsets containing \(h_{\mathrm{base}}\), this is one
of the maximally separated choices under Hamming distance, with a minimum
pairwise distance of four. The selection therefore spans structurally
distinct training configurations while remaining independent of model
performance. As shown in Table~\ref{tab:single_harness_sensitivity},
Single-Harness RL varies with the fixed training harness, whereas Harness
Scaling remains stronger than all four evaluated single-harness settings
under both In-Space and Claude Code evaluation. This provides additional
evidence that the observed advantage is not specific to the base training
harness.

\begin{table}[t]
\centering
\caption{
Sensitivity of Single-Harness RL to the fixed training harness on
Qwen3-4B-Instruct with GRPO. Results are averaged across the three
benchmarks. Harness Scaling is included for reference.
}
\label{tab:single_harness_sensitivity}

\small
\setlength{\tabcolsep}{7pt}
\renewcommand{\arraystretch}{1.08}

\begin{tabular}{lccc}
\toprule
\textbf{Setting}
& \textbf{Harness}
& \textbf{In-Space}
& \textbf{Claude Code} \\
\midrule

Single-Harness RL
& \(h_{\mathrm{base}}\)
& 31.06 & 28.57 \\

Single-Harness RL
& \(h_1\)
& 35.79 & 36.11 \\

Single-Harness RL
& \(h_2\)
& 33.52 & 33.11 \\

Single-Harness RL
& \(h_3\)
& 35.15 & 32.72 \\

\midrule

Harness Scaling
& \(\mathcal H_{\mathrm{tr}}\)
& \textbf{40.86} & \textbf{37.22} \\

\bottomrule
\end{tabular}
\end{table}

\section{Additional Analysis Details}
\label{app:analysis_details}

\subsection{Behavioral Metrics}
We use six trajectory-level behavioral metrics to characterize
harness-related capabilities beyond task-level performance.
\textbf{Instruction Following} measures whether the policy correctly follows
the system instructions and execution constraints specified by the harness.
\textbf{Interface Handling} measures whether the policy correctly interprets
and uses the available tool interface, particularly when tool names or
schemas differ from the base harness.
\textbf{Observation Reading} measures whether the policy extracts
task-relevant information from tool feedback despite additional formatting
or distracting metadata.
\textbf{State Tracking} measures whether the policy maintains an accurate
understanding of task progress, intermediate results, and the evolving
workspace state throughout long-horizon execution.
\textbf{Workflow Compliance} measures whether the policy follows procedural
requirements imposed by the harness, such as explicit planning or periodic
verification.
\textbf{Error Recovery} measures whether the policy recognizes execution
failures, adapts its strategy, and continues toward task completion rather
than repeating or propagating the same error.

\subsection{Metric Evaluation}
We evaluate the six capabilities from complete agent trajectories using
GPT-5.4 as the trajectory judge. For each model scale and each of the
18 held-out in-space harnesses, we collect 200 complete trajectories for
both Single-Harness RL and Harness Scaling on the same 200 ClawGym-Bench
tasks. For each trajectory, the judge is provided with the task instruction,
harness specification, and complete interaction history, including policy
actions, tool calls, and tool observations.

Each capability is scored independently on a five-point scale. A score of
1 indicates that the corresponding capability is largely absent or repeatedly
violated; 2 indicates substantial deficiencies; 3 indicates partial or
inconsistent satisfaction; 4 indicates generally correct behavior with
minor deviations; and 5 indicates consistent and effective satisfaction of
the criterion throughout the trajectory. The judge assigns each dimension
separately rather than producing a single overall trajectory score.

We use the same judging prompt, rubric, and judge model across both
approaches and all harness settings. For each behavioral dimension, we first
average scores over the 200 trajectories within each held-out harness, and
then average equally across all 18 harnesses. For visualization, the
five-point scores are normalized by dividing by five, resulting in values
between 0.2 and 1.0 as reported in
Figures~\ref{fig:analysis_a} and~\ref{fig:analysis_b}.

\end{document}